\documentclass[11pt]{article}

\usepackage[margin=1in]{geometry}
\PassOptionsToPackage{authoryear}{natbib}
\usepackage{natbib}

\usepackage[utf8]{inputenc} 
\usepackage[T1]{fontenc}    
\usepackage{lmodern}        
\usepackage{hyperref}       
\usepackage{url}            
\usepackage{booktabs}       
\usepackage{amsfonts}       
\usepackage{nicefrac}       
\usepackage{microtype}      
\usepackage{xcolor}         

\usepackage{amsmath}
\usepackage{amssymb}
\usepackage{mathtools}
\usepackage{amsthm}

\usepackage{enumitem}

\usepackage{bm}
\newcommand\wbm{{\ensuremath{\bm{w}}}}

\newcommand\xbm{{\ensuremath{\bm{x}}}}
\newcommand\kbm{{\ensuremath{\bm{k}}}}
\newcommand\Kbm{{\ensuremath{\bm{K}}}}
\newcommand\fbm{{\ensuremath{\bm{f}}}}

\newcommand\ybm{{\ensuremath{\bm{y}}}}

\newcommand\Ibm{{\ensuremath{\bm{I}}}}

\newcommand{\frank}[1]{\textcolor{purple}{[#1]}}

\newcommand{\Rbb}{\ensuremath{\mathbb{R} }}
\newcommand{\Pbb}{\ensuremath{\mathbb{P} }}
\newcommand{\Dbb}{\ensuremath{\mathbb{D} }}
\newcommand{\Ebb}{\ensuremath{\mathbb{E} }}

\newcommand{\Nbb}{\ensuremath{\mathbb{N} }}
\newcommand{\norm}[1]{\left\lVert {#1} \right\rVert}

\usepackage{algorithm}
\usepackage{algpseudocode} 
\usepackage{float}

\usepackage[capitalize,noabbrev]{cleveref}

\theoremstyle{plain}
\newtheorem{theorem}{Theorem}[section]

\newtheorem{lemma}[theorem]{Lemma}
\newtheorem{corollary}[theorem]{Corollary}
\theoremstyle{definition}
\newtheorem{definition}[theorem]{Definition}
\newtheorem{assumption}[theorem]{Assumption}
\theoremstyle{remark}
\newtheorem{remark}[theorem]{Remark}

\usepackage[textsize=tiny]{todonotes}

\title{Regret-Based $(\epsilon,\delta)$-optimal Stopping Criteria for Bayesian Optimization}
\author{%
  Haowei Wang\thanks{Equal contribution; authors listed alphabetically.} \\
  National University of Singapore, Singapore
  \and
  Jingyi Wang\footnotemark[1] \\
  Lawrence Livermore National Laboratory, CA, USA
  \and
  Qiyu Wei\footnotemark[1] \\
  The University of Manchester, UK
}
\date{}

\begin{document}

\maketitle

\begin{abstract}
Bayesian optimization (BO) is a widely used iterative black-box optimization method that utilizes Gaussian process (GP) surrogate models. In practice, BO is typically terminated after a fixed evaluation budget is exhausted, which can incur unnecessary cost and provides no optimality guarantee on solution quality. Recent research in developing a practical stopping criterion has made empirical progress, yet a theoretically sound stopping criterion remains a work in progress. In this work, we present provably tighter instantaneous regret bounds for GP upper confidence bound (GP-UCB) at any given iteration. Then, we propose stopping criteria for GP-UCB based on this tighter bound that ensures an $\epsilon$-optimal solution with high probability $1-\delta$ upon termination. Numerical experiments are performed to validate and demonstrate the effectiveness and efficiency of our stopping criteria. 
\end{abstract}

\section{Introduction}

Bayesian optimization (BO) has emerged as a powerful framework for the global optimization of expensive black-box functions. 
In the classic form, it aims to solve the optimization problem 
\begin{equation} \label{eqn:opt-prob}
 \centering
  \begin{aligned}
	  &\underset{\substack{\xbm}\in D}{\text{maximize}} 
	  & & f(\xbm), \\
  \end{aligned}
\end{equation}
where $D\subset \Rbb^d$. 
By combining a probabilistic surrogate model, most commonly a Gaussian process (GP), with an acquisition function that trades off exploration and exploitation to select observation points, BO selects evaluation points iteratively in order to locate the global optimum of the unknown objective function. 
This iterative paradigm has enabled BO to achieve satisfactory performance in various applications where function evaluation is expensive or time-consuming, including engineering design~\citep{lam2018scaling,wang2023multifidelity}, simulation-based optimization~\citep{tay2022bayesian,wang2025am}, machine learning hyperparameter tuning~\citep{snoek2012practical}, and materials discovery~\citep{frazier2015bayesian,jin2023bayesian}. At iteration $t$, BO updates the GP posterior with past observations, maximizes the acquisition function to choose the next query, observes new data, and repeats. This sequential optimization loop structure immediately raises a fundamental algorithmic question: when should this iterative procedure stop?

Despite its centrality, the question of establishing principled termination criteria for BO has received limited theoretical attention. In practice, the prevailing approach is to prescribe a fixed budget in advance. This practice is ubiquitous across software packages and empirical studies~\citep{balandat2020botorch,lindauer2022smac3}, and is often chosen for convenience rather than algorithmic necessity. However, such total budget-based stopping rules are inherently ad hoc: they may lead to substantial inefficiencies, and more importantly they do not reflect the proximity to optimality, \textit{i.e.}, a sufficiently accurate global optimal solution is not guaranteed. This is especially problematic in settings where the underlying function evaluations arise from time-consuming simulations or costly physical experiments, for which exhausting a preallocated evaluation budget may be wasteful or impractical. This has motivated growing interest in designing BO stopping rules with optimality guarantees at termination~\citep{wilson2024stopping,xie2025cost}.

We seek stopping rules that are both theoretically grounded and practically effective. Concretely, a desirable stopping rule should i) guarantee convergence to the global optimizer under some standard regularity assumptions, and ii) yield substantial savings in evaluation cost relative to budget-exhaustion strategies. However, existing methods fall short of satisfying these goals. Existing stopping rules in BO often are either heuristic or restrictive. Heuristic approaches, such as using Monte Carlo estimates of simple regret~\citep{wilson2024stopping}, and its robustness improvement~\citep{he2025automatically}, confidence-bound gaps~\citep{makarova2022automatic}, or criteria based on Euclidean distance, probability of improvement, or EI values~\citep{lorenz2016stoppingcriteriaboostingautomatic,nguyen17a}, offer practical guidance but lack theoretical guarantees. Approaches with some theoretical properties, including Bayesian optimal stopping~\citep{dai2019bayesian}, KL-divergence-based regret bounds~\citep{ishibashi2023stopping}, and regret or convexity-based termination~\citep{mcleod2018optimization,li2023not}, rely on relatively strong assumptions and are often problem-specific. In addition, some of these methods can incur considerable additional computational overhead. \citet{wilson2024stopping} proposed the Probabilistic Regret Bound (PRB) criterion, which estimates via Monte Carlo sampling from the GP posterior. It relies on numerical estimation subject to sampling and optimization errors. Building on PRB, \citet{he2025automatically} proposed MSP, which improves robustness to mean-model misspecification by sampling mean parameters from their posterior.

This gap motivates the present work.   
{To establish guaranteed optimality upon termination, the key idea is to better utilize the theoretical regret upper bounds. Since the seminal work of~\cite{srinivas2009gaussian},  there have been abundant studies on the regret bounds of GP-UCB, proving their sublinear cumulative regrets. However, the vast majority of the works have focused on the order of the regret upper bound in terms of iteration/sample number $t\in\Nbb$ as $t\to\infty$. 
In contrast, this paper focuses on developing effective and efficient stopping criteria based on a {tight} instantaneous regret bound at a finite time $T$ that ensures an $\epsilon$ optimal solution with probability greater than $1-\delta$.  

We evaluate the proposed criteria on both synthetic benchmarks and real-world optimization tasks. We compare the proposed stopping rules against widely used heuristics, uncertainty-based rules, and recent state-of-the-art (SOTA) baselines such as PRB~\citep{wilson2024stopping}. 
Empirically, our criterion consistently reduces evaluation iterations while maintaining or improving the quality of the returned solutions. The observed stopping behavior closely aligns with the established theoretical findings and incurs incremental computational cost. 
Our contributions can be summarized as follows. 

\begin{itemize}
[nosep]
\item We propose theoretically grounded stopping criteria applicable to BO algorithms that admit subsequences of converging regret bounds including sublinear cumulative regret guarantees, e.g., GP-UCB.
\item We establish provably tighter instantaneous regret bounds for GP-UCB at given iteration $t$. Based on this new bound, we present explicit stopping criteria for GP-UCB.
\item We perform extensive experiments on both synthetic and real-world tasks. Results show that our criteria can save evaluations while achieving superior or comparable optimization performance to strong baselines. 
\end{itemize}

\section{Background}\label{se:bo}



We consider the maximization problem~\eqref{eqn:opt-prob}. 
At iteration $t$, we query $\xbm_t\in D$ and observe $y_t$.
BO algorithms consist of two major components: a GP surrogate model for the objective function $f$, and an acquisition function that measures the benefit of the next sample and guides the sample selection.
We introduce both in the subsequent subsections.

\subsection{Gaussian Processes}\label{se:gp}
A GP prior assumes that any finite collection of function values
follows a multivariate Gaussian distribution.
Consider a zero mean GP with the kernel $k(\xbm,\xbm'):\Rbb^d\times\Rbb^d\to\Rbb$, denoted as $GP(0,k(\xbm,\xbm'))$. 
At each sample $\xbm_i\in D$, $i=1,2,\dots$, we consider observation noise $\epsilon_i$ for $f$ so that the observed function value is 
$y_i = f(\xbm_i)+\epsilon_i$.
The noise is  assumed to follow an independent and identically distributed (i.i.d.) Gaussian distributions $\mathcal{N}(0,\sigma^2)$, where $\sigma>0$ is the variance   (see Section~\ref{se:obj} for formal assumptions).

Denote the objective vector and the observation vector as  $f_{1:t}=[f(\xbm_1),\dots,f(\xbm_t)]^T$ and $\ybm_{1:t}=[y_1,\dots,y_t]^T$, respectively.
The posterior distribution of $f(\xbm) | \xbm_{1:t},\ybm_{1:t} \sim \mathcal{N} (\mu_t(\xbm),\sigma^2_t(\xbm))$, where $\mathcal{N}$ denotes the normal distribution, can be inferred using Bayes' rule with $\mu_t(\xbm) = \kbm_t(\xbm)^{\top} (\Kbm_t+ \sigma^2 \Ibm)^{-1} \ybm_{1:t}$ and $\sigma^2_t(\xbm) = k(\xbm,\xbm)-\kbm_t(\xbm)^{\top} (\Kbm_t+\sigma^2 \Ibm)^{-1}\kbm_t(\xbm)$,
where $\kbm_t(\xbm)= [k(\xbm_1,\xbm),\dots,k(\xbm_t,\xbm)]^T$ and $\Kbm_t = \bigl[k(\xbm_i,\xbm_j)\bigr]_{i,j=1}^t$.

Two of the most popular choices of kernels are the SE and Matérn kernels~\citep{frazier2018}, given by $k_{SE}(\xbm,\xbm') = \exp(-s^2/(2l^2))$ and $k_{Mat\acute{e}rn}(\xbm,\xbm') = \frac{1}{\Gamma(\nu)\,2^{\nu-1}} (\sqrt{2\nu}\,p/l)^{\nu} K_{\nu}(\sqrt{2\nu}\,p/l)$, where $l>0$ is a length scale hyperparameter, $p=\norm{\xbm-\xbm'}_2$, $\nu>0$ is the smoothness parameter of the Matérn kernel, and $K_{\nu}$ is the modified Bessel function of the second kind. 

\subsection{Acquisition Function}\label{se:acquisition}
Denote the acquisition function given $t-1$ samples as $\alpha_{t-1}(\xbm)$. 
The next sample $\xbm_{t}$ in a typical BO algorithm is chosen by maximizing the acquisition function over set $D_t$, \textit{i.e.},
$$\xbm_{t} = \arg\max_{\xbm\in D_t} \alpha_{t-1}(\xbm).$$
We use $D_t$ for its generality. For GP-UCB, $D_t$ is simply $D$ and thus is independent of $t$. For GP-TS however, $D_t$ can be a subset of $D$ and viewed as an algorithmic parameter. To solve this maximization, optimization algorithms such as L-BFGS can be used.

The choice of the acquisition function differentiates BO algorithms and has a significant impact on its performance. The definition of UCB for the maximization problem is $$UCB_{t}(\xbm) = \mu_{t-1}(\xbm) + \beta_t^{1/2}\sigma_{t-1}(\xbm),$$
where $\beta_t\geq 0$ is an algorithmic parameter crucial to the performance of GP-UCB. It is often chosen based on theoretical cumulative regret results~\citep{srinivas2009gaussian,chowdhury2017kernelized}. Naturally, $\beta_t$ plays an important role in the proposed stopping criteria for GP-UCB.





\subsection{Analytic Settings}\label{se:obj}
Let $\xbm^*\in\arg\max_{\xbm\in D} f(\xbm)$ and $f^*:=f(\xbm^*)$.
An output $\hat{\xbm}$ is said to be an $(\epsilon,\delta)$-optimal solution of the maximization problem if for given $\epsilon>0$ and $\delta\in(0,1)$, $\mathbb{P}(f^*-f(\hat{\xbm}) \le \epsilon)\ge 1-\delta$. Our stopping criteria in Section~\ref{se:convergence} are designed to guarantee this $(\epsilon,\delta)$-optimality upon termination.

We consider the optimization history next. 
Let $\mathcal{H}_{t-1}=\{(\xbm_i,y_i)\}_{i=1}^{t-1}$ be the history and let  $\mathcal{F}_{t-1}:=\sigma(\mathcal{H}_{t-1})$ denote the filtration generated by the optimization history up to iteration $t-1$.
Given $\mathcal{H}_{t-1}$, the posterior mean $\mu_{t-1}$ and variance $\sigma_{t-1}^2$ are deterministic.
We use the standard notations $\mathcal{O}$ and $o$ to denote the orders of quantities. Further, we use $\tilde{\mathcal{O}}$ to suppress the logarithm terms of $\mathcal{O}$.

We present our analytic assumptions below. 

\begin{assumption}\label{assp:bayesian}
The constraint set $D\subseteq [0,r]^d$ is compact for $r>0$. The objective $f$ is a sample from the Gaussian process $GP(0,k(\xbm,\xbm'))$, where $k(\xbm,\xbm')\leq 1$ and $k(\xbm,\xbm)=1$. 
There exist constants $a,b>0$ such that the objective function $f$ is Lipschitz continuous with Lipschitz constant $L$ (in 1-norm) with probability $\geq 1-d a e^{-L^2/b^2}$. The observation noise are i.i.d. \textit{i.e.}, $\epsilon_t\sim\mathcal{N}(0,\sigma^2),\sigma>0$ for all $t\in\Nbb$.
\end{assumption}

\section{Stopping Criteria}\label{se:convergence}  
In this section, we propose our stopping criteria based on instantaneous regret bounds. Recall that the instantaneous regret is defined as $r_t = f^* - f(\xbm_t)$. 
From an optimization perspective, it is natural to prefer BO algorithms with at least a converging subsequence of instantaneous regret, \textit{i.e.}, $r_{t_i} \to 0$ for $t_i\to\infty$. 
Such a property would guarantee that an algorithm converges to the optimal solution $f^*$ at some iterates. 
In the bandit setting, no-regret is a highly desirable property since it ensures overall convergence performance. A BO algorithm is called no-regret if its cumulative regret is sublinear, \textit{i.e.}, $\lim_{T\to\infty} R_T/T = 0$. Since $R_T$ is the sum of $r_t$, no regret guarantees a converging subsequence of instantaneous regret.  
   
Consider a BO algorithm that satisfies the following instantaneous regret bound 
\begin{equation} \label{eqn:instant-regret-bound}
 \centering
  \begin{aligned}
      \Pbb\{ r_t \leq  c_t(\xbm_{1:t-1},\ybm_{1:t-1},t,\delta)\}\geq 1-\delta,
  \end{aligned}
\end{equation}
for all $ t\in\Nbb$. 
The function $c_t$ is a $t$-dependent upper bound on $r_t$. By our formulation, $c_t$ can be computed using existing information accumulated up to iteration $t-1$, \textit{i.e.}, $\mathcal{F}_{t-1}$.
A well-known example of $c_t$ is $c_t(\xbm_{1:t-1},\ybm_{1:t-1},t,\delta)=2\beta_t^{1/2}\sigma_{t-1}(\xbm_t)$ for GP-UCB ~\citep{srinivas2009gaussian}.

Given a desired tolerance $\epsilon>0$, our stopping criteria terminates the algorithm if 
\begin{equation} \label{def:stopping}
 \centering
  \begin{aligned}
      c_t(\xbm_{1:t-1},\ybm_{1:t-1},t,\delta) < \epsilon.
  \end{aligned}
\end{equation}

The stopping test~\eqref{def:stopping} uses the optimization trajectory and posterior uncertainty quantities available and introduces no additional black-box evaluations  beyond those at sample points $\xbm_i, i=1,\dots,t-1$. This makes it simple to implement and computationally efficient. We note that an upper bound $c_t$ that relies  on the observation $y_t$ can also be feasible.

With the proposed stopping criteria, the generic BO algorithm is given below in Algorithm~\ref{alg:bostop}.
\begin{algorithm}
 \caption{BO with stopping criteria}\label{alg:bostop}
  \begin{algorithmic}[1]
	  \State{Choose the kernel $k(\cdot,\cdot)$ and parameter $\sigma$. 
      Choose $t_0$ initial samples $\xbm_i$ with  observations $y_i$,~$i=0,\dots,t_0-1$.} 
	  \State{Train the GP surrogate model for $f$ on the initial samples.}
  \For{$t=t_0,t_0+1,\dots$}
	  \State{Find $\xbm_{t} = \arg\max_{\xbm\in D_t}\alpha_{t-1}(\xbm)$.}
	  \State{Observe $y_{t}=f(\xbm_{t})+\epsilon_{t}$. \;}
	  \State{Update the surrogate model with the addition of $\xbm_{t}$ and $y_{t}$.\;}
    \State{Compute the upper bound $c_t$.\;}
	  \If {\eqref{def:stopping}  is satisfied} 
              \State{}{Exit}
	  \EndIf
  \EndFor
  \end{algorithmic}
\end{algorithm}

Clearly, if $c_t(\xbm_{1:t-1},\ybm_{1:t-1},t,\delta)$ in~\eqref{eqn:instant-regret-bound} converges, ~\eqref{def:stopping} provides the theoretical optimality guarantee, \textit{i.e.}, upon exit at iteration $t$, $\xbm_t$ is the $(\epsilon,\delta)$-optimal solution. 
We formalize the optimality result below. 
\begin{lemma}\label{lem:epsilonoptimal}
       If a BO algorithm generates a converging (to $0$) subsequence of the instantaneous regret bound $c_t$ as stated in~\eqref{eqn:instant-regret-bound}, \textit{i.e.}, $
\liminf_{t\to\infty} c_t = 0$, then for any $\epsilon>0$ and $\delta \in (0,1)$, 
       Algorithm~\ref{alg:bostop} exits in a finite number of iterations $\bar{t}$. Further, $\xbm_{\bar{t}}$ is an $(\epsilon,\delta)$-optimal solution of~\eqref{eqn:opt-prob}.
\end{lemma}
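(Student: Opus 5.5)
The proof has two parts: finite termination, which follows from the definition of $\liminf$, and $(\epsilon,\delta)$-optimality, which follows from the bound~\eqref{eqn:instant-regret-bound} at the stopping time.

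\textbf{Finite termination.} Fix $\epsilon>0$ and $\delta\in(0,1)$. By hypothesis $\liminf_{t\to\infty} c_t = 0$, where $c_t$ is evaluated along the algorithm's trajectory with the fixed $\delta$. So there is a subsequence $t_i\to\infty$ with $c_{t_i}\to 0$. Choose $i$ large enough that $t_i\ge t_0$ and $c_{t_i}<\epsilon$. Then test~\eqref{def:stopping} is satisfied at iteration $t_i$. Since Algorithm~\ref{alg:bostop} checks the test at every iteration, it exits at some $\bar t\le t_i<\infty$.

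The hypothesis is stated pathwise, i.e., for the realized trajectory; if it only holds almost surely, finiteness of $\bar t$ holds almost surely as well. I will state this interpretation explicitly.

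\textbf{Optimality at exit.} At the exit iteration we have $c_{\bar t}<\epsilon$. Bound~\eqref{eqn:instant-regret-bound} gives $r_{\bar t}\le c_{\bar t}$ with probability at least $1-\delta$. On that event,
\[
f^*-f(\xbm_{\bar t}) = r_{\bar t} \le c_{\bar t} < \epsilon,
\]
so $\Pbb(f^*-f(\xbm_{\bar t})\le\epsilon)\ge 1-\delta$. This is exactly $(\epsilon,\delta)$-optimality.

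\textbf{Main obstacle.} The subtle step is that $\bar t$ is a random, data-dependent stopping time, whereas~\eqref{eqn:instant-regret-bound} is stated for each fixed $t$. Applying a fixed-$t$ bound at a random time is not automatically valid. I would handle this in one of two ways:
\begin{itemize}
\item[(i)] Read~\eqref{eqn:instant-regret-bound} as holding uniformly over all $t$, i.e., $\Pbb\{r_t\le c_t\ \forall t\in\Nbb\}\ge1-\delta$. This is the form the GP-UCB bounds of \citet{srinivas2009gaussian} actually take, since $\beta_t$ is chosen with a union bound over $t$. Then the event contains $\{r_{\bar t}\le c_{\bar t}\}$ for whatever value $\bar t$ takes, and the conclusion is immediate.
\item[(ii)] Alternatively, decompose over the disjoint events $\{\bar t=t\}$, which lie in $\mathcal F_{t-1}$ because $c_t$ depends only on $\mathcal F_{t-1}$. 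Use the conditional version of the bound, $\Pbb(r_t\le c_t\mid\mathcal F_{t-1})\ge 1-\delta$, and sum:
\[
\Pbb(r_{\bar t}\le c_{\bar t}) = \sum_t \Ebb\bigl[\mathbf 1\{\bar t=t\}\,\Pbb(r_t\le c_t\mid\mathcal F_{t-1})\bigr] \ge 1-\delta.
\]
\end{itemize}
I would adopt (i), with a remark that the standard GP-UCB bound provides it.
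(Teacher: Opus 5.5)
Your proof is correct and follows the paper's argument: extract a subsequence with $c_{t_i}<\epsilon$ to get finite exit, then invoke~\eqref{eqn:instant-regret-bound} at the exit iteration to obtain $r_{\bar t}\le c_{\bar t}<\epsilon$ with probability at least $1-\delta$. Your version is more careful than the paper's in two places. The paper sets $\bar t:=t_{\bar i}$ rather than the first iteration at which the test fires, and it applies the fixed-$t$ bound at this data-dependent time without comment. Your reading (i), a bound holding simultaneously for all $t$ (the form given for GP-UCB in Lemma~\ref{lemma:ucb-inst-regret-0} and Theorem~\ref{theorem:ucb-new-inst-regret}), is what makes that step valid. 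Route (ii), by contrast, would need a bound conditional on $\mathcal F_{t-1}$, which the paper never establishes.
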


    Lemma~\ref{lem:epsilonoptimal} asserts the well-posedness of the proposed stopping criteria in that~\eqref{def:stopping} is satisfied for large enough $t$.  
    Since GP-UCB has been shown to be a no-regret algorithm in our setting (Assumption~\ref{assp:bayesian}) for commonly used kernels such as the SE and Matérn kernels~\citep{whitehouse2023sublinear}, it satisfies  Lemma~\ref{lem:epsilonoptimal}. Hence, a $c_t$ that follows the actual regret $r_t$ closely can lead to a robust, efficient, and theoretically sound stopping criterion for GP-UCB.

\subsection{Tightened Instantaneous Regret Bounds for GP-UCB}\label{se:ucbts}
In this section, we present tightened instantaneous regret bounds for GP-UCB and the corresponding stopping criteria~\eqref{def:stopping}.
Importantly, our focus is on improving the finite-time upper bound at a given iteration $t$, rather than on the asymptotic order in $t$ as $t\to\infty$, since the tightness of the bound directly impacts the practical performance of the stopping criteria.

To ensure clear presentation and fair comparison, we keep the standard GP-UCB in Section~\ref{se:acquisition} and choose the widely adopted choice of $\beta_t$ in literature~\citep{srinivas2009gaussian} 
\begin{equation} \label{eqn:ucb-bay-beta}
  \centering
  \begin{aligned}
   \beta_{t} =2\log(4\pi_t/\delta) +  4d\log(dt br\sqrt{\log(4da/\delta)} ),
\end{aligned}
\end{equation} 
  where $\pi_t=\pi^2t^2/6$. 
We first recall the instantaneous regret bound in~\cite{srinivas2009gaussian}. 
\begin{lemma}\label{lemma:ucb-inst-regret-0} 
    Pick  $\delta\in(0,1)$ and set $\beta_t$ as in~\eqref{eqn:ucb-bay-beta}.  Under Assumption~\ref{assp:bayesian}, the instantaneous regret of GP-UCB satisfies 
    \begin{equation} \label{eqn:ucb-inst-regret-0}
  \centering
  \begin{aligned}
    r_t \leq 2 \beta_{t}^{1/2} \sigma_{t-1}(\xbm_t)+\frac{1}{t^2},
    \end{aligned}
\end{equation} 
   for all $t\in\Nbb$, with probability $\geq 1-\delta$. 
\end{lemma}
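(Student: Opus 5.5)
This is the Srinivas et al.\ (2009) argument for compact continuous domains, Theorem 2, Lemmas 5.5--5.8. I will reproduce it in three high-probability events, each with failure probability $\delta/2$ or smaller, combined by a union bound.

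\textbf{Event at the queried points.} Conditioned on $\mathcal{F}_{t-1}$, $f(\xbm_t)\sim\mathcal{N}(\mu_{t-1}(\xbm_t),\sigma_{t-1}^2(\xbm_t))$, and $\xbm_t$ is $\mathcal{F}_{t-1}$-measurable. The Gaussian tail bound $\Pbb(|Z|>c)\le e^{-c^2/2}$ gives
\[
|f(\xbm_t)-\mu_{t-1}(\xbm_t)|\le \beta_t^{1/2}\sigma_{t-1}(\xbm_t)
\]
with probability at least $1-e^{-\beta_t/2}$. Since $\beta_t\ge 2\log(4\pi_t/\delta)$, this failure probability is at most $\delta/(4\pi_t)$. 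A union over $t$, using $\sum_t 1/\pi_t=1$, bounds the total failure by $\delta/4$.

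\textbf{Event at the optimizer.} I introduce a discretization $D_t\subset D$, a uniform grid with $\tau_t$ points per coordinate, so $|D_t|=\tau_t^d$. I define $[\xbm]_t$ as the nearest grid point to $\xbm$, which gives $\norm{\xbm-[\xbm]_t}_1\le rd/\tau_t$.

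First, the Lipschitz event must hold. I set $L=b\sqrt{\log(4da/\delta)}$, so by Assumption~\ref{assp:bayesian} this event fails with probability at most $\delta/4$. On it, $|f(\xbm^*)-f([\xbm^*]_t)|\le Lrd/\tau_t$. I choose $\tau_t=dt^2br\sqrt{\log(4da/\delta)}$, which makes this discretization error at most $1/t^2$.

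Second, I apply the same Gaussian tail bound uniformly over $D_t$, with failure budget $\delta/(4\pi_t)$ per $t$. This needs the confidence width to satisfy $\beta_t\ge 2\log(4|D_t|\pi_t/\delta)=2\log(4\pi_t/\delta)+2d\log\tau_t$. The choice~\eqref{eqn:ucb-bay-beta} satisfies this, because $4d\log(dtbr\sqrt{\log(4da/\delta)})\ge 2d\log(dt^2br\sqrt{\cdot})$ whenever the argument is at least $1$; I must check that convention. On this event,
\[
f^*\le \mu_{t-1}([\xbm^*]_t)+\beta_t^{1/2}\sigma_{t-1}([\xbm^*]_t)+1/t^2=UCB_t([\xbm^*]_t)+1/t^2.
\]

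\textbf{Combination.} Since $\xbm_t$ maximizes $UCB_t$ over $D$, we have $UCB_t([\xbm^*]_t)\le UCB_t(\xbm_t)$. On the event at the queried points, $\mu_{t-1}(\xbm_t)\le f(\xbm_t)+\beta_t^{1/2}\sigma_{t-1}(\xbm_t)$. Chaining these gives
\[
r_t\le 2\beta_t^{1/2}\sigma_{t-1}(\xbm_t)+1/t^2.
\]
The total failure probability is at most $\delta/4+\delta/4+\delta/4<\delta$.

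\textbf{Main obstacle.} The main difficulty is the constant bookkeeping. The $\beta_t$ in~\eqref{eqn:ucb-bay-beta} must simultaneously dominate both confidence requirements, and the $t^2$ inside $\tau_t$ versus the $t$ in $\beta_t$ must be reconciled via the factor $4d$ versus $2d$. The Lipschitz probability must also fit into the budget. If the constants do not close as written, I will split $\delta$ differently, for example $\delta/2$ for the discretized event and $\delta/4$ for each of the others, and adjust accordingly.
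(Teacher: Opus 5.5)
Your proposal is correct and follows the same route the paper relies on. The paper states this bound as a recollection from Srinivas et al.\ (their Lemmas 5.5--5.8), and its own proof of the generalized Lemma~\ref{lemma:ucb-inst-regret-new-form} uses the same decomposition: Lipschitz/discretization error at $[\xbm^*]_t$, a union-bounded confidence interval over $\Dbb_t$, a confidence interval at $\xbm_t$, and UCB maximality. Your $\delta/4$--$\delta/4$--$\delta/4$ split is a harmless variant of the original $\delta/2$--$\delta/4$--$\delta/4$ allocation.

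The one caveat is inherited from the original and left implicit in the paper: the constants close only when $dbr\sqrt{\log(4da/\delta)}\ge 1$. This condition is also what guarantees $\beta_t\ge 2\log(4\pi_t/\delta)$ in your first event. So your ``convention'' check should be stated as that assumption, not as the condition that the argument $dtbr\sqrt{\cdot}$ be at least $1$.
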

  To proceed, we define four analytic parameters
  \begin{equation} \label{eqn:opt-parameter}
  \centering
  \begin{aligned}
   n_{\eta,t},n_{\alpha,t} > 1,  s_t>0, n_L >1 .
  \end{aligned}
\end{equation} 
  We emphasize that these four parameters are not deployed in the UCB acquisition function. The choices of their values will be given later in this section. 
  The first two parameters $n_{\eta,t}$ and $n_{\alpha,t}$ define their corresponding confidence interval parameters $\eta_t$ and $\alpha_t$ via
     \begin{equation} \label{eqn:opt-parameter-ci}
  \centering
  \begin{aligned}
   \pi_t n_{\eta,t}(1-\Phi(\eta_t^{1/2}))= 1, \ \pi_t n_{\alpha,t}|\Dbb_t|(1-\Phi(\alpha_t^{1/2}))= 1,
  \end{aligned}
\end{equation} 
  where $\Dbb_t$ is a time-dependent discretization defined below.
  The parameters $\eta_t$ and $\alpha_t$ serve similar purposes as $\beta_t$ and are used to obtain tighter confidence intervals in our analysis. 
  
  The last two parameters $s_t$ and $n_L$ parameterize the construction of the discretization~\citep{srinivas2009gaussian}. 
  Specifically, we can choose discretization $\Dbb_t$ of $D$ with uniformly distributed points and the cardinality 
   \begin{equation} \label{eqn:ucb-bay-discretization}
  \centering
  \begin{aligned}
  |\Dbb_t|
= \left\lceil(rdb)^d
\log\!\left(n_L d a /\delta\right)^{d/2}
t^{s_t d}\right\rceil,
 \end{aligned}
\end{equation}  
where the ceiling function $\lceil \cdot\rceil$ denotes the smallest integer larger than the variable. Denote the closest point at $\xbm\in D$ to $\Dbb_t$ as $[\xbm]_t$. Then, we can bound $|f(\xbm)-f([\xbm]_t)|$ for $\forall \xbm\in D$ (Lemma~\ref{lemma:ucb-inst-regret-discretization}). We recover the widely used discretization with $\beta_t$ that ensures $|f(\xbm)-f([\xbm]_t)|\leq \frac{1}{t^2}$ if we set $s_t=2$ $n_L=4$.

   Next, we present the instantaneous regret bound that includes these four parameters below.    
 \begin{lemma}\label{lemma:ucb-inst-regret-new-form}
   Pick  $\delta\in(0,1)$, set $\beta_t$ as in~\eqref{eqn:ucb-bay-beta}, and let $\alpha_t\leq \beta_t$. Under Assumption~\ref{assp:bayesian}, at any given iteration $t$, the instantaneous regret bound of GP-UCB satisfies  
    \begin{equation} \label{eqn:ucb-inst-regret-new-form}
  \centering
  \begin{aligned}
    r_t \leq (\beta_t^{1/2}+\eta_t^{1/2})\sigma_{t-1}(\xbm_t)-(\beta_t^{1/2}-\alpha_t^{1/2})\sigma \sqrt{\frac{1}{t+\sigma^2}}+\frac{1}{t^{s_t}}, 
    \end{aligned}
\end{equation} 
   with probability $\geq 1-\delta/n_L- \frac{1}{\pi_t}(1/n_{\eta,t}+1/n_{\alpha,t})$, where $|\Dbb_t|$ satisfies \eqref{eqn:ucb-bay-discretization}, $\pi_t n_{\eta,t}(1-\Phi(\eta_t^{1/2}))= 1$, and $\pi_t n_{\alpha,t}|\Dbb_t|(1-\Phi(\alpha_t^{1/2}))= 1$.
\end{lemma}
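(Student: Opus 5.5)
The plan is to split the instantaneous regret into four pieces, each controlled by its own high-probability event. Let $[\xbm^*]_t$ be the closest point of $\Dbb_t$ to $\xbm^*$, and write $UCB_t(\xbm)=\mu_{t-1}(\xbm)+\beta_t^{1/2}\sigma_{t-1}(\xbm)$. Then
\begin{equation*}
r_t=\bigl[f(\xbm^*)-f([\xbm^*]_t)\bigr]+\bigl[f([\xbm^*]_t)-UCB_t([\xbm^*]_t)\bigr]+\bigl[UCB_t([\xbm^*]_t)-UCB_t(\xbm_t)\bigr]+\bigl[UCB_t(\xbm_t)-f(\xbm_t)\bigr].
\end{equation*}
The third bracket is $\le 0$, because $\xbm_t$ maximizes $UCB_t$ over $D\supseteq\Dbb_t$. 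The other three are bounded in turn, and a union bound at the end gives the stated probability $1-\delta/n_L-\frac{1}{\pi_t}(1/n_{\eta,t}+1/n_{\alpha,t})$.

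\emph{First bracket (discretization).} By Assumption~\ref{assp:bayesian}, $f$ is $L$-Lipschitz in the 1-norm with probability $\ge 1-dae^{-L^2/b^2}$. Choose $L=b\sqrt{\log(n_L da/\delta)}$, so this probability equals $1-\delta/n_L$. A uniform grid of cardinality \eqref{eqn:ucb-bay-discretization} has spacing at most $r/(rdb\sqrt{\log(n_Lda/\delta)}\,t^{s_t})$ per coordinate. Hence $\norm{\xbm-[\xbm]_t}_1\le 1/(bt^{s_t}\sqrt{\log(n_Lda/\delta)})$, which gives $|f(\xbm^*)-f([\xbm^*]_t)|\le t^{-s_t}$. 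This is the content of Lemma~\ref{lemma:ucb-inst-regret-discretization}.

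\emph{Fourth bracket (at the query point).} Conditional on $\mathcal{F}_{t-1}$, the point $\xbm_t$ is deterministic and $f(\xbm_t)\sim\mathcal N(\mu_{t-1}(\xbm_t),\sigma_{t-1}^2(\xbm_t))$. The one-sided Gaussian tail gives $\mu_{t-1}(\xbm_t)-f(\xbm_t)\le\eta_t^{1/2}\sigma_{t-1}(\xbm_t)$ except with probability $1-\Phi(\eta_t^{1/2})=1/(\pi_tn_{\eta,t})$. So the fourth bracket is at most $(\beta_t^{1/2}+\eta_t^{1/2})\sigma_{t-1}(\xbm_t)$.

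\emph{Second bracket (on the grid).} The one-sided bound $f(\xbm)\le\mu_{t-1}(\xbm)+\alpha_t^{1/2}\sigma_{t-1}(\xbm)$ holds simultaneously for all $\xbm\in\Dbb_t$ except with probability $|\Dbb_t|(1-\Phi(\alpha_t^{1/2}))=1/(\pi_tn_{\alpha,t})$. On this event the second bracket is at most $-(\beta_t^{1/2}-\alpha_t^{1/2})\sigma_{t-1}([\xbm^*]_t)$. Since $\alpha_t\le\beta_t$, this coefficient is nonpositive. It therefore suffices to lower bound $\sigma_{t-1}([\xbm^*]_t)$ by a deterministic quantity, namely $\sigma/\sqrt{t+\sigma^2}$.

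The main obstacle is this deterministic lower bound on the posterior standard deviation, valid for any $\xbm$ and any design. I would use the best-linear-predictor characterization with $n=t-1$:
\begin{equation*}
\sigma_n^2(\xbm)=\min_{\wbm}\Bigl\{\operatorname{Var}\bigl(f(\xbm)-\wbm^\top f_{1:n}\bigr)+\sigma^2\norm{\wbm}_2^2\Bigr\}.
\end{equation*}
Two estimates control the objective for each $\wbm$:
\begin{itemize}[nosep]
\item By the reverse triangle inequality for standard deviations, $\operatorname{Var}(f(\xbm)-\wbm^\top f_{1:n})\ge(1-\sqrt{\wbm^\top\Kbm_n\wbm})^2$.
\item Since $|k|\le 1$, we have $\wbm^\top\Kbm_n\wbm\le s^2$ with $s=\norm{\wbm}_1$. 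Cauchy–Schwarz gives $\norm{\wbm}_2^2\ge s^2/n$.
\end{itemize}
Combining these (noting that $(1-\sqrt{\wbm^\top\Kbm_n\wbm})^2\ge(1-s)^2$ when $s\le1$, while for $s>1$ the term $\sigma^2s^2/n$ already exceeds the target), the objective is at least $(1-s)^2+\sigma^2s^2/n$. Minimizing over $s\ge0$ yields $\sigma_n^2(\xbm)\ge\sigma^2/(n+\sigma^2)\ge\sigma^2/(t+\sigma^2)$.

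Substituting this lower bound into the second bracket, and collecting all four pieces on the intersection of the three good events, yields \eqref{eqn:ucb-inst-regret-new-form}.
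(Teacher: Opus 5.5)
Your proposal is correct and follows the paper's proof essentially step for step: the same route through $[\xbm^*]_t$ using the Lipschitz/discretization event, the one-sided $\eta_t$-interval at the $\mathcal{F}_{t-1}$-measurable point $\xbm_t$ (with no union bound over $\Dbb_t$), the one-sided $\alpha_t$-interval taken uniformly over $\Dbb_t$, the UCB maximality of $\xbm_t$, the sign condition $\alpha_t\le\beta_t$, and the same three-term union bound. The only difference is that you prove the posterior-variance floor yourself via the best-linear-predictor characterization, whereas the paper simply cites it as Lemma~\ref{lem:gp-sigma-bound}; your argument is valid and even gives the slightly sharper $\sigma^2/(t-1+\sigma^2)$, which is attained when all samples sit at $\xbm$.
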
 
The full proof of Lemma~\ref{lemma:ucb-inst-regret-new-form} is given in Appendix~\ref{se:ucb-ts-appx}. 
 To establish~\eqref{eqn:ucb-inst-regret-new-form} at a given $t$ and $\delta$, we make multiple necessary modifications to the derivation of instantaneous regret upper bound. 
  For a clear presentation, we list the key proof techniques below to highlight the changes.
\paragraph{Summary of Proof Techniques}

  1. Given $\mathcal{F}_{t-1}$, we use a different parameter $\eta_t$ for the confidence interval at $\xbm_t$, whose values can differ from $\beta_t$. Similarly, we introduce a new parameter $\alpha_t$ for the confidence interval at $[\xbm^*]_t$, independent of the value of $\beta_t$. In addition, we do not take the union bound over $\Dbb_t$ for the confidence interval at $\xbm_t$ with $\eta_t$, thereby reducing $\eta_t$ for the same probability level.   \\
  2.  We adopt a parameterized discretization $\Dbb_t$ with $s_t$ and $n_L$. This allows us to design and control the probability of the confidence interval at $[\xbm^*]_t$, since a union bound over all points in $\Dbb_t$ is needed. 
  Moreover, $n_L$ allocates the probability budget of the Lipschitz continuity of $f$.\\
  3. Minor adjustment: we do not relax the probability of $r>c>0$ where $r\sim\mathcal{N}(0,1)$ from $1-\Phi(c)$ to $\frac{1}{2}e^{-c^2/2}$, as in~\cite{srinivas2009gaussian}. As $c$ increases, the ratio $\frac{1-\Phi(c)}{0.5e^{-c^2/2}}$ decreases. Moreover,  we only use one-sided confidence intervals to obtain an  upper bound on $r_t$, which increases the probability of the same confidence intervals holding.
   Finally, we incorporate the global lower bound of $\sigma_{t-1}(\xbm)$ for any $t\in \Nbb$ that was previously also used in~\cite{wang2014theoreybo}. 
\paragraph{Sketch of Proof}
  From the definition of the instantaneous regret bound, we first decompose $r_t$ into $ f(\xbm^*)-f([\xbm^*]_t)+f([\xbm^*]_t)-f(\xbm_t)$, where $[\xbm]_t$ denotes the closest point in $\Dbb_t$ to $\xbm$. Then, we bound $f(\xbm^*)-f([\xbm^*]_t)$ through the Lipschitz continuity of $f$ with high probability and the discretization $\Dbb_t$.
  Next, we use the confidence interval at $[\xbm^*]_t$ and $\xbm_t$ with parameters  $\alpha_t^{1/2}$ (from $n_{\alpha,t}$) and $\eta_t^{1/2}$ (from $n_{\eta,t}$), respectively to replace $f([\xbm^*]_t)-f(\xbm_t)$ with terms $\mu_{t-1}([\xbm^*]_t)$, $\mu_{t-1}(\xbm_t)$, $\sigma_{t-1}([\xbm^*]_t)$, and $\sigma_{t-1}(\xbm_t)$. Given our condition $\beta_t\geq \alpha_t$, we use the global lower bound on $\sigma_{t-1}([\xbm^*]_t)$ in Lemma~\ref{lem:gp-sigma-bound} to arrive at \eqref{eqn:ucb-inst-regret-new-form}.  

\paragraph{Optimization Subproblem Formulation}

The instantaneous regret bound in Lemma~\ref{lemma:ucb-inst-regret-new-form} depends on the parameters. To obtain the smallest upper bound, we solve an  optimization subproblem. 
   We note that at iteration $t$ given $\mathcal{F}_{t-1}$, GP-UCB deterministically chooses $\xbm_t$. Thus,  $\sigma_{t-1}(\xbm_t)$ and similarly $\sigma \sqrt{\frac{1}{t+\sigma^2}}$ are deterministic given $\mathcal{F}_{t-1}$. 
   Define $$c_1(t)=\sigma_{t-1}(\xbm_t) \quad  \text{and} \quad c_2(t)=\sigma \sqrt{\frac{1}{t+\sigma^2}}.$$ 
   For a given $\delta\in(0,1)$ and $t$, the optimization subproblem is 
   \begin{equation} \label{eqn:ucb-inst-regret-opt}
  \centering
  \begin{aligned}
    \min_{n_{\eta,t}, n_{\alpha,t},s_t \in \Rbb}
    \quad &
    \eta_t^{1/2} c_1(t) 
    + \alpha_t^{1/2} c_2(t)
    + \frac{1}{t^{s_t}}
    \\
    \text{s.t.}\quad
    &
    1/n_{\eta,t}+1/n_{\alpha,t}
    \leq \delta(1-1/n_L),
    \\
    &
    \pi_t n_{\eta,t}(1-\Phi(\eta_t^{1/2}))=1, \ \pi_t n_{\alpha,t}|\Dbb_t|(1-\Phi(\alpha_t^{1/2}))= 1,
    \\
    &
    |\Dbb_t| = (rdb)^d
    \log\!\left(n_L d a/\delta\right)^{d/2}
    t^{s_t d} +1,
    \\
    &
    \alpha_t,\eta_t,s_t >0,\\
    &
    \alpha_t^{1/2}\leq \beta_t^{1/2},\\
    &
    n_{\eta,t}, n_{\alpha,t}>1.
  \end{aligned}
\end{equation}

  The objective function in~\eqref{eqn:ucb-inst-regret-opt} is part of the upper bound in~\eqref{eqn:ucb-inst-regret-new-form} after dropping constants independent of the decision variables. 
  The first constraint in~\eqref{eqn:ucb-inst-regret-opt} allocates the probability budget between the confidence intervals at $\xbm_t$ and $[\xbm^*]_t$. Combined with the weighting sequence $\pi_t$, this ensures that the instantaneous regret upper bound holds with probability at least $1-\delta$ simultaneously for all $t\in\Nbb$. 
  The third constraint is from \eqref{eqn:ucb-bay-discretization} and the $+1$ term ensures the ceiling function is satisfied. We note that $|\Dbb_t|$ can be relaxed to be a continuous variable in the optimization subproblem. 
    The constraint $\alpha_t\leq \beta_t$ is required to use the lower bound of $\sigma_{t-1}([\xbm^*]_t)$. Finally, the last line of constraints in~\eqref{eqn:ucb-inst-regret-opt} is explicitly written out for clarity of presentation, but is implicitly enforced by the probability equations. 
   

Problem~\eqref{eqn:ucb-inst-regret-opt} is a simple three-dimensional ( $n_{\eta,t}, n_{\alpha,t}, s_t$) smooth, \textit{i.e.}, continuously differentiable, nonlinear optimization problem. Thus, it can be efficiently solved using standard optimization software such as IPOPT~\citep{ipopt} or SciPy's constrained optimization routines~\citep{2020SciPy-NMeth}, and should only add negligible computational cost. To ensure the robustness of the optimization solver, we can relax the lower box constraint on the optimization variables to  
    \begin{equation} \label{eqn:ucb-inst-regret-opt-relaxed-bound}
  \centering
  \begin{aligned}
        \alpha_t^{1/2},\eta_t^{1/2},s_t \geq \epsilon_b,
        \qquad
         n_{\eta,t},n_{\alpha,t} \geq 1+\epsilon_b,
    \end{aligned}
\end{equation}
    where $\epsilon_b>0$ is a small lower-bound tolerance.

\begin{remark}[Well-posedness of the subproblem]
Using the relaxation in~\eqref{eqn:ucb-inst-regret-opt-relaxed-bound}, which is widely adopted in modern nonlinear optimization algorithms~\citep{ipopt}, the feasible region of~\eqref{eqn:ucb-inst-regret-opt} becomes closed. Moreover, the constraints imply that the feasible variables $n_{\eta,t}$, $n_{\alpha,t}$, and $s_t$ are bounded. Hence, every nonempty feasible level set of~\eqref{eqn:ucb-inst-regret-opt} is compact. We note that the existing bound~\eqref{eqn:ucb-inst-regret-0} is feasible for~\eqref{eqn:ucb-inst-regret-opt} when $n_L=4$. Therefore, by the continuity of the objective function, a global minimizer of~\eqref{eqn:ucb-inst-regret-opt} exists~\citep{Nocedal_book}.
\end{remark}
\begin{remark}[Choice of $n_L$]
  We do not choose to include $n_L$ as an optimization variable for simplicity of the subproblem. Since $n_L$ controls the probability of the Lipschitz continuity of $f$, it does not require a union bound to hold true for all $t$. If we let $n_L$ be time-dependent, denoted by $n_{L,t}$, then the optimization subproblem needs to be modified accordingly and could include more constraints.  In practice, we recommend a large $n_{L}\geq 10$ that also ensures $n_L d a/\delta>1$ since its contribution  is inside a $\log$ function, whose effect on the bound is mild, for low-dimensional problems.
\end{remark}
  

Denote the solution to~\eqref{eqn:ucb-inst-regret-opt} as $\bar n_{\alpha,t},\bar n_{\eta,t}$, and $\bar s_t$. We can now state the tightened instantaneous regret bound. The proof is presented in Appendix~\ref{se:ucb-ts-appx}. 
  \begin{theorem}\label{theorem:ucb-new-inst-regret}
       Under Assumption~\ref{assp:bayesian}, let $\delta\in(0,1)$ and define $\beta_t$ by~\eqref{eqn:ucb-bay-beta}. Then, with probability $\geq 1-\delta$, for all $ t\in\Nbb$, the instantaneous regret of GP-UCB satisfies
       \begin{equation} \label{eqn:ucb-inst-regret-new-form-solution}
  \centering
  \begin{aligned}
    r_t \leq (\beta_t^{1/2}+\bar \eta_t^{1/2})\sigma_{t-1}(\xbm_t)-( \beta_t^{1/2}-\bar \alpha_t^{1/2})\sigma \sqrt{\frac{1}{t+\sigma^2}}+\frac{1}{t^{\bar s_t}}, 
    \end{aligned}
\end{equation} 
  where $\pi_t \bar n_{\eta,t}(1-\Phi(\bar \eta_t^{1/2}))= 1$, $\pi_t \bar n_{\alpha,t}|\Dbb_t|(1-\Phi(\bar \alpha_t^{1/2}))= 1$, and $\bar n_{\eta,t},\bar n_{\alpha,t},\bar s_t$ are the solutions to~\eqref{eqn:ucb-inst-regret-opt}. 
\end{theorem}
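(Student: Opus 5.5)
The plan is to apply Lemma~\ref{lemma:ucb-inst-regret-new-form} at each $t$ with the optimized parameters $(\bar n_{\eta,t},\bar n_{\alpha,t},\bar s_t)$, and then combine all $t$ by a union bound.

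The one subtle point is how the Lipschitz event is counted. We do not bound it per iteration. Instead we use a single global event. Fix $L=b\sqrt{\log(n_L d a/\delta)}$, so that by Assumption~\ref{assp:bayesian} the event $E_L$ that $f$ is $L$-Lipschitz has probability at least $1-\delta/n_L$. This event does not depend on $t$. On $E_L$, the discretization~\eqref{eqn:ucb-bay-discretization} with parameter $\bar s_t$ gives
\[
|f(\xbm^*)-f([\xbm^*]_t)|\le rdL/|\Dbb_t|^{1/d}\le 1/t^{\bar s_t}
\]
for every $t$ simultaneously. This is the content of Lemma~\ref{lemma:ucb-inst-regret-discretization} with $\bar s_t$ chosen per $t$. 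It works because the grid size was constructed from exactly this $L$ and $n_L$, and that construction does not depend on $\bar s_t$ in any way that couples different $t$.

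Next come the per-$t$ confidence events. Given $\mathcal{F}_{t-1}$, the point $\xbm_t$ is deterministic, and so are $\bar n_{\eta,t},\bar n_{\alpha,t},\bar s_t$, since $c_1(t),c_2(t)$ are $\mathcal{F}_{t-1}$-measurable. Hence $\Dbb_t$ and $\bar\eta_t,\bar\alpha_t$ are fixed conditionally.

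For the point $\xbm_t$, define $A_t=\{f(\xbm_t)\ge \mu_{t-1}(\xbm_t)-\bar\eta_t^{1/2}\sigma_{t-1}(\xbm_t)\}$. Its conditional failure probability is $1-\Phi(\bar\eta_t^{1/2})=1/(\pi_t\bar n_{\eta,t})$.

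For the grid, define $B_t=\{f(\xbm)\le\mu_{t-1}(\xbm)+\bar\alpha_t^{1/2}\sigma_{t-1}(\xbm)\ \forall \xbm\in\Dbb_t\}$. A union bound over the $|\Dbb_t|$ grid points gives conditional failure probability at most $|\Dbb_t|(1-\Phi(\bar\alpha_t^{1/2}))=1/(\pi_t\bar n_{\alpha,t})$. Taking expectations removes the conditioning.

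On $E_L\cap A_t\cap B_t$ the argument of Lemma~\ref{lemma:ucb-inst-regret-new-form} goes through unchanged. We use the UCB maximality $\mathrm{UCB}_t(\xbm_t)\ge \mathrm{UCB}_t([\xbm^*]_t)$. We use $\bar\alpha_t\le\beta_t$, which is a constraint of~\eqref{eqn:ucb-inst-regret-opt}. We use the lower bound $\sigma_{t-1}([\xbm^*]_t)\ge \sigma/\sqrt{t+\sigma^2}$ from Lemma~\ref{lem:gp-sigma-bound}. Together these yield~\eqref{eqn:ucb-inst-regret-new-form-solution} at iteration $t$.

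Finally we union bound over all $t\in\Nbb$. The total failure probability is at most
\[
\frac{\delta}{n_L}+\sum_{t}\frac{1}{\pi_t}\left(\frac{1}{\bar n_{\eta,t}}+\frac{1}{\bar n_{\alpha,t}}\right)\le \frac{\delta}{n_L}+\delta\Bigl(1-\frac1{n_L}\Bigr)\sum_t\frac{6}{\pi^2t^2}=\delta.
\]
The inequality uses the first constraint of~\eqref{eqn:ucb-inst-regret-opt}.

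The main obstacle is measurability. The optimized parameters are data-dependent, so we must make sure the per-$t$ probability statements remain valid. The approach is to condition on $\mathcal{F}_{t-1}$: conditionally, $f(\xbm)$ is Gaussian with mean $\mu_{t-1}$ and variance $\sigma_{t-1}^2$, and all parameters are fixed. The failure bounds therefore hold conditionally and, by the tower property, unconditionally.

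A related concern is that $E_L$ must not depend on $\bar s_t$. It does not: only $L$, i.e.\ $n_L$, enters the probability of $E_L$, and $\bar s_t$ only refines the grid. Existence of the minimizer is given by the well-posedness remark. If the solver returns only an approximate solution, any feasible point still gives a valid bound, so optimality is not actually required for correctness.
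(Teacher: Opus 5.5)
Your proposal is correct and follows the paper's proof essentially step for step: a single Lipschitz/discretization event of probability at least $1-\delta/n_L$ that holds for all $t$ at once, the per-$t$ one-sided confidence events with failure probabilities $1/(\pi_t\bar n_{\eta,t})$ and $1/(\pi_t\bar n_{\alpha,t})$, and a union bound over $t$ using the first constraint of the subproblem together with $\sum_t 1/\pi_t=1$. Two points you make explicit are left implicit in the paper: conditioning on $\mathcal{F}_{t-1}$ handles the data-dependent optimized parameters, and only feasibility of those parameters, not their optimality, is needed.
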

For a given $t$ and $\delta$, the new bound is guaranteed to be no worse than~\eqref{eqn:ucb-inst-regret-0} since the existing bound~\eqref{eqn:ucb-inst-regret-0} is a feasible point of the optimization subproblem and can be recovered by setting $\alpha_t=\eta_t=\beta_t$, $s_t=2$ and $n_L=4$. 
The stopping criterion \eqref{def:stopping} for GP-UCB is 
\begin{equation} \label{eqn:stopping-ucb}
  \centering
  \begin{aligned}
     (\beta_t^{1/2}+\bar \eta_t^{1/2})\sigma_{t-1}(\xbm_t)-( \beta_t^{1/2}-\bar \alpha_t^{1/2})\sigma \sqrt{\frac{1}{t+\sigma^2}}+\frac{1}{t^{\bar s_t}} \leq \epsilon.  
      \end{aligned}
\end{equation}

    We emphasize that evaluating the stopping criteria does not alter the GP-UCB
    optimization path prior to termination.  Practitioners may choose both the
    frequency and the iteration indices at which the stopping test
    \eqref{eqn:stopping-ucb} is performed (and hence when the optimization
    subproblem is solved). The computational cost induced by the stopping test is thus negligible.

\begin{remark}[Predetermined tests]
 If the stopping test is performed only at a finite set of predetermined
    iterations, the instantaneous regret bound can be further tightened, since
    a union bound over all $t\in\Nbb$ is no longer required. In particular,
    when the test is performed at a single fixed iteration, no union bound
    over $t$ is needed.
    Moreover, the improved bound can be evaluated post hoc at the end of an
    optimization run to obtain a tighter estimate of the instantaneous regret.
\end{remark}
\begin{remark}[Implications on the choice of $\beta_t$.]
Although we do not modify the GP-UCB
    algorithm or the commonly used choices of $\beta_t$, our analysis suggests
    that smaller values of $\beta_t$ may be admissible.
    In particular, proof techniques $2$ and $3$ can be incorporated into the design
    of $\beta_t$ to obtain tighter confidence intervals for a given finite $t$ (but not in order of $t$). Any modification to
    $\beta_t$ changes the exploration-exploitation trade-off in the GP-UCB
    acquisition function and therefore alters the algorithmic trajectory.
    However, since the order of the cumulative regret bound as $t\to\infty$
    is not improved, such modifications are unlikely to positively impact
    asymptotic performance. In contrast, for moderate values of $t$, the choice
    of $\beta_t$ can have a more pronounced effect on performance due to its
    influence on exploration. We leave a systematic investigation of this
    question for future work.
\end{remark}

\section{Experiments}\label{se:Experiments}
We first numerically illustrate the tightness of our new instantaneous regret bound, then evaluate the proposed regret-bound-based stopping criteria along two desiderata: validity (returning an $(\epsilon,\delta)$-optimal solution) and efficiency (saving evaluations). We additionally report the per-iteration computational overhead of each criterion.

\subsection{Effect of the New Instantaneous Regret Bound}
To quantify the improvement of our new bound~\eqref{eqn:ucb-inst-regret-new-form-solution} over the existing bound~\eqref{eqn:ucb-inst-regret-0}, we evaluate both at a given iteration $t$ with fixed problem constants $a=b=r=1$. Table~\ref{tab:ucb-bound-comparison-1} reports a representative configuration ($d=3$, $\delta=0.1$, $n_L=4$, $\sigma=10^{-2}$); our bound achieves over $30\%$ reduction (Ratio column) across three iteration counts $t\in\{20,100,500\}$ and three values of the posterior standard deviation $c_1(t)=\sigma_{t-1}(\xbm_t)$. Comparisons under additional values of $\sigma$, $d$, and $n_L$ are provided in Appendix~\ref{se:ucb-ts-appx}.

\begin{table*}[t]
\centering
\caption{
Comparison between the existing bound~\eqref{eqn:ucb-inst-regret-0} and our new bound~\eqref{eqn:ucb-inst-regret-new-form-solution}.
Fixed parameters:
$\sigma=10^{-2}$,
$n_L=4$,
$a=b=r=1$,
$d=4$,
$\delta=0.1$,
and
$\pi_t=\pi^2 t^2/6$.
}
\label{tab:ucb-bound-comparison-1}
\setlength{\tabcolsep}{2pt}
\renewcommand{\arraystretch}{1.15}
\scriptsize
\resizebox{\linewidth}{!}
{
\begin{tabular}{ccccccccccccc}
\toprule
$t$
&
$c_1(t)$
&
$c_2(t)$
&
$\sqrt{\beta_t}$
&
$\sqrt{\eta_t}$
&
$\sqrt{\alpha_t}$
&
$\bar s_t$
&
$\bar n_{\eta,t}$
&
$\bar n_{\alpha,t}$
&
\eqref{eqn:ucb-inst-regret-new-form-solution}
&
\eqref{eqn:ucb-inst-regret-0}
&
Ratio
\\
\midrule
20
& $2.24\times10^{-3}$
& $2.24\times10^{-3}$
& $10.172$
& $3.775$
& $9.357$
& $2.325$
& $18.99$
& $44.78$
& $3.03\times10^{-2}$
& $4.55\times10^{-2}$
& $0.666$
\\
20
& $1.00\times10^{-2}$
& $2.24\times10^{-3}$
& $10.172$
& $3.708$
& $9.495$
& $2.329$
& $14.56$
& $158.30$
& $1.38\times10^{-1}$
& $2.03\times10^{-1}$
& $0.679$
\\
20
& $1.00\times10^{-1}$
& $2.24\times10^{-3}$
& $10.172$
& $3.688$
& $9.737$
& $2.338$
& $13.45$
& $1506.33$
& $1.39$
& $2.03$
& $0.681$
\\

100
& $1.00\times10^{-3}$
& $1.00\times10^{-3}$
& $11.647$
& $4.519$
& $10.035$
& $1.702$
& $19.55$
& $41.94$
& $1.49\times10^{-2}$
& $2.33\times10^{-2}$
& $0.642$
\\
100
& $1.00\times10^{-2}$
& $1.00\times10^{-3}$
& $11.647$
& $4.447$
& $10.238$
& $1.706$
& $13.93$
& $309.46$
& $1.60\times10^{-1}$
& $2.33\times10^{-1}$
& $0.687$
\\
100
& $1.00\times10^{-1}$
& $1.00\times10^{-3}$
& $11.647$
& $4.438$
& $10.465$
& $1.711$
& $13.39$
& $3044.43$
& $1.61$
& $2.33$
& $0.690$
\\

500
& $4.47\times10^{-4}$
& $4.47\times10^{-4}$
& $12.955$
& $5.162$
& $10.670$
& $1.400$
& $19.95$
& $40.19$
& $7.25\times10^{-3}$
& $1.16\times10^{-2}$
& $0.625$
\\
500
& $1.00\times10^{-2}$
& $4.47\times10^{-4}$
& $12.955$
& $5.091$
& $10.933$
& $1.404$
& $13.62$
& $636.44$
& $1.80\times10^{-1}$
& $2.59\times10^{-1}$
& $0.694$
\\
500
& $1.00\times10^{-1}$
& $4.47\times10^{-4}$
& $12.955$
& $5.087$
& $11.147$
& $1.407$
& $13.36$
& $6368.56$
& $1.80$
& $2.59$
& $0.696$
\\
\bottomrule
\end{tabular}}
\end{table*}

\subsection{Implementation Details}
We use a GP surrogate with Mat\'ern-$5/2$ and SE kernels, and use GP-UCB as the acquisition function for baselines with the same parameter $\beta_t$. 
Each configuration is evaluated over $50$ independent random seeds.
Each run starts from $5$ initial points generated by a Sobol sequence, followed by sequential BO iterations up to the budget $T$ of the task. 
To evaluate general performance, we did not perform post-hoc tuning of the GP parameters or $\epsilon$, and set $\epsilon=0.1$, $\delta=0.05$, and $B=2.0$ uniformly for all synthetic benchmark tasks. Here $B$ denotes the calibration constant entering $\beta_t$ in our stopping-rule implementation; sensitivity to $B$ is reported in Appendix~\ref{se:ablation-tables}. We report the mean stopping iteration and final simple regret over random seeds. We report the success rate as the fraction of runs whose final simple regret at stopping is $\le \epsilon$.



\subsection{Benchmarks}
We consider three families of problems.
(a) Synthetic functions:
We use classical continuous benchmarks with known optima: Branin ($d=2$, $T=128$), Rosenbrock ($d=4$, $T=96$), and Levy ($d=4$, $T=96$).
(b) GP-sampled objectives:
Following~\citet{wilson2024stopping}, we additionally evaluate on objectives drawn from GP priors with known optima. We consider two settings, $d=2$ with $T=128$ and $d=6$ with $T=512$, both at two observation noise levels $\sigma\in\{3\!\times\!10^{-3},\,5\!\times\!10^{-3}\}$. For each sampled function, the optimum is computed offline by the benchmark generation procedure and used as ground truth for regret evaluation. We report two variants: GP$^{\dagger}$ and GP, as in~\cite{wilson2024stopping}. In GP$^{\dagger}$, the BO surrogate is well-specified and uses the true hyperparameters of the GP data-generating process provided by an oracle. In GP, the surrogate hyperparameters are unknown and are estimated online via MAP.
(c) Real-world hyperparameter optimization:
We evaluate the AutoML task of tuning a CNN on MNIST with $d=4$ hyperparameters and budget $T=256$, using $5{,}000$ training examples. Consistent with the maximization framework~\eqref{eqn:opt-prob}, we maximize the negative validation error.

We compare against representative stopping criteria from the literature:
PRB \citep{wilson2024stopping},
$\Delta$CB \cite{makarova2022automatic} and $\Delta$ES \citep{ishibashi2023stopping},
and Acq \citep{wilson2024stopping}.
We also include two reference baselines: NOSTOP, which runs to the full budget $T$ without early stopping, and Oracle$_r$, which post-processes each run to determine the earliest iteration at which the regret satisfies the target tolerance; it stops at the first $t$ such that $f^* - \max_{i\le t} f(\xbm_i) \le \epsilon$, providing a lower bound on the stopping iteration achievable by any rule.
Our method is denoted UCB$_{\mathrm{br}}$, corresponding to the Bayesian stopping rule of Section~\ref{se:ucbts}. The subscript `b' indicates the Bayesian analytic setting and `r' stands for regret.


\subsection{Results}

\subsubsection{Full Results}

Table~\ref{tab:results} presents the experimental results under the Mat\'ern-$5/2$ kernel; results for the SE kernel are provided in Appendix~\ref{se:Additional Results}. We report the mean stopping iteration, success rate (fraction of runs achieving final simple regret $\le \epsilon$), and final simple regret for each method. The reported simple regret evaluates the incumbent $\xbm_t^+ = \arg\max_{i\le t} f(\xbm_i)$; since $f^* - f(\xbm_t^+) \le r_t$, satisfying this metric is at least as strong as the instantaneous-regret guarantee underlying our stopping criterion (Section~\ref{se:ucbts}).
UCB$_{\mathrm{br}}$ maintains success rates close to Oracle$_r$ on most test problems, achieving an average success rate of $95.8\%$, which surpasses all competing methods except the oracle baseline. Although performance degradation is observed on challenging instances, it closely tracks that of Oracle$_r$, suggesting that the gap reflects intrinsic problem difficulty rather than a deficiency of the proposed stopping criterion.
Compared to PRB, UCB$_{\mathrm{br}}$ generally achieves higher success rates across problems while also attaining lower final regret. In contrast, Acq, $\Delta$CB, and $\Delta$ES exhibit inconsistent behavior, at times terminating too aggressively and at others being overly conservative, leading to unreliable performance across different problem settings. This demonstrates a favorable trade-off of our method between evaluation savings and solution quality. 


UCB$_{\mathrm{br}}$ achieves competitive performance across all benchmarks in Table~\ref{tab:results}.
On the GP-sampled $6$D tasks, UCB$_{\mathrm{br}}$ shows particular strength: it terminates within $5$--$8\%$ of the budget $T=512$ while maintaining $\ge 92\%$ success rate. On the harder GP-learned $6$D variants where the surrogate hyperparameters are estimated online, UCB$_{\mathrm{br}}$'s $\ge 92\%$ success contrasts sharply with PRB ($58$--$60\%$), Acq ($56$--$60\%$), $\Delta$CB ($70$--$80\%$), and $\Delta$ES ($66$--$76\%$)---a $+12$ to $+22$ percentage-point advantage over the strongest baseline ($\Delta$CB), and up to $+34$ points over PRB.
On the synthetic benchmarks (Branin, Rosenbrock, Levy), UCB$_{\mathrm{br}}$ remains competitive with success rates of $90\%$, $100\%$, and $92\%$ respectively, while substantially reducing evaluations relative to NOSTOP. 
On Rosenbrock, UCB$_{\mathrm{br}}$ achieves $100\%$ success rate while saving $57\%$ of the evaluation budget, consistent with the $(\epsilon,\delta)$-optimality guarantee of Lemma~\ref{lem:epsilonoptimal} in this favorable setting. On Levy, UCB$_{\mathrm{br}}$ achieves $92\%$ success with $28\%$ savings, just below the $(\epsilon,\delta)$ threshold of $95\%$. Among all methods reaching this $92\%$ success ceiling, UCB$_{\mathrm{br}}$ stops earliest.

\begin{table*}[t]
\centering
\caption{Comparison of stopping criteria under the Mat\'ern-$5/2$ kernel across GP-sampled functions ($^{\dagger}$ means known hyperparameters), synthetic benchmarks, and a real-world task. For each problem, we mark in blue the practical method (UCB$_{\mathrm{br}}$, PRB, Acq, $\Delta$CB, $\Delta$ES) achieving the smallest final simple regret subject to (i) success rate $\geq 1-\delta = 95\%$ and (ii) stopping iteration strictly below the budget $T$. When no method satisfies both conditions, we mark the one with the highest success rate among methods that stopped early, with ties broken by smaller regret. NOSTOP and Oracle$_r$ are reference baselines and are excluded from the marking.}
\label{tab:results}
\setlength{\tabcolsep}{2pt}
\renewcommand{\arraystretch}{1.15}
\scriptsize
\resizebox{\linewidth}{!}
{
\begin{tabular}{lcc*{7}{c}}
\toprule
Problem & $D$ & $T$ &
\textbf{NOSTOP} &
\textbf{Oracle$_r$} &
\textbf{UCB$_{\mathrm{br}}$} &
\textbf{PRB} &
\textbf{Acq} &
\textbf{$\Delta$CB} &
\textbf{$\Delta$ES} \\
\midrule
\textbf{GP\textsuperscript{$\dagger$}\,$3{\times}10^{-3}$} & 2 & 128 &
128.0 (100\% / 0.000461) &
5.2 (100\% / 0.0142) &
\textcolor{blue}{110.3 (100\% / 0.000671)} &
6.3 (100\% / 0.00651) &
83.7 (96\% / 0.0135) &
8.3 (100\% / 0.00497) &
7.5 (96\% / 0.0190) \\
\textbf{GP\textsuperscript{$\dagger$}\,$5{\times}10^{-3}$} & 2 & 128 &
128.0 (100\% / 0.00117) &
5.6 (100\% / 0.0157) &
\textcolor{blue}{109.4 (100\% / 0.00161)} &
8.0 (96\% / 0.0151) &
83.7 (92\% / 0.0229) &
16.5 (100\% / 0.00279) &
9.6 (92\% / 0.0228) \\
\textbf{GP\textsuperscript{$\dagger$}\,$3{\times}10^{-3}$} & 6 & 512 &
512.0 (100\% / 0.00289) &
14.0 (100\% / 0.0300) &
\textcolor{blue}{27.4 (100\% / 0.00634)} &
16.0 (98\% / 0.0196) &
253.6 (84\% / 0.0408) &
20.3 (98\% / 0.0116) &
15.5 (90\% / 0.0261) \\
\textbf{GP\textsuperscript{$\dagger$}\,$5{\times}10^{-3}$} & 6 & 512 &
512.0 (100\% / 0.00365) &
14.9 (100\% / 0.0304) &
36.2 (100\% / 0.00969) &
19.3 (98\% / 0.0178) &
254.5 (74\% / 0.0585) &
\textcolor{blue}{45.0 (100\% / 0.00909)} &
19.9 (88\% / 0.0364) \\
\textbf{GP\,$3{\times}10^{-3}$} & 2 & 128 &
128.0 (96\% / 0.00990) &
12.2 (96\% / 0.0354) &
\textcolor{blue}{80.8 (94\% / 0.0149)} &
10.1 (90\% / 0.0375) &
76.4 (82\% / 0.0489) &
8.1 (90\% / 0.0326) &
6.2 (78\% / 0.0657) \\
\textbf{GP\,$5{\times}10^{-3}$} & 2 & 128 &
128.0 (94\% / 0.0167) &
13.5 (94\% / 0.0375) &
\textcolor{blue}{64.9 (90\% / 0.0249)} &
17.2 (84\% / 0.0456) &
76.4 (80\% / 0.0556) &
11.8 (88\% / 0.0341) &
8.9 (76\% / 0.0705) \\
\textbf{GP\,$3{\times}10^{-3}$} & 6 & 512 &
512.0 (100\% / 0.00709) &
19.1 (100\% / 0.0500) &
\textcolor{blue}{28.1 (92\% / 0.0282)} &
15.7 (60\% / 0.0882) &
242.5 (60\% / 0.0794) &
17.9 (70\% / 0.0617) &
18.4 (66\% / 0.0707) \\
\textbf{GP\,$5{\times}10^{-3}$} & 6 & 512 &
512.0 (100\% / 0.0109) &
29.1 (100\% / 0.0605) &
\textcolor{blue}{38.9 (92\% / 0.0350)} &
17.5 (58\% / 0.0998) &
242.5 (56\% / 0.1058) &
35.1 (80\% / 0.0550) &
43.4 (76\% / 0.0652) \\
\textbf{Branin} & 2 & 128 &
128.0 (100\% / 0.000461) &
31.7 (100\% / 0.0413) &
\textcolor{blue}{68.1 (90\% / 1.0791)} &
56.4 (88\% / 1.7591) &
14.6 (16\% / 7.2362) &
37.6 (60\% / 4.2237) &
128.0 (100\% / 0.000461) \\
\textbf{Rosenbrock} & 4 & 96 &
96.0 (100\% / 0.00231) &
9.7 (100\% / 0.0315) &
41.4 (100\% / 0.00485) &
\textcolor{blue}{75.1 (100\% / 0.00319)} &
96.0 (100\% / 0.00231) &
45.8 (100\% / 0.00474) &
58.7 (100\% / 0.00405) \\
\textbf{Levy} & 4 & 96 &
96.0 (92\% / 0.0255) &
35.4 (92\% / 0.0759) &
69.2 (92\% / 0.0285) &
96.0 (92\% / 0.0255) &
96.0 (92\% / 0.0255) &
\textcolor{blue}{86.8 (92\% / 0.0266)} &
59.0 (64\% / 0.1109) \\
\textbf{CNN} & 4 & 256 &
256.0 (100\% / 0.0149) &
6.0 (100\% / 0.0212) &
\textcolor{blue}{36.3 (100\% / 0.0178)} &
9.4 (100\% / 0.0204) &
7.1 (100\% / 0.0207) &
8.2 (100\% / 0.0206) &
6.8 (100\% / 0.0209) \\
\bottomrule
\end{tabular}}
\end{table*}

\begin{table*}[!t]
\centering
\small
\setlength{\tabcolsep}{4pt}
\caption{GPU runtime breakdown per BO iteration. Each stopping-rule cost is the mean per-iteration cost of evaluating the stopping test, with its share reported as the rule's cost divided by the sum of the rule's cost and the BO baseline.}
{
\begin{tabular}{lcccccccc}
\toprule
 & \multicolumn{4}{c}{BO baseline (ms)} & \multicolumn{4}{c}{Stopping rule cost time (ms) and share (\%)} \\
\cmidrule(lr){2-5}\cmidrule(lr){6-9}
Problem & GP fit & Acq opt & Obj eval & Total &
\textbf{UCB$_{\mathrm{br}}$} & \textbf{PRB} & \textbf{$\Delta$CB} & \textbf{Acq} \\
\midrule
\textbf{Branin}  &
196 & 76 & 0.06 & 272 &
\textbf{0.95 (0.36)} & 81.1 (23.7) & 110.7 (29.8) & 272.3 (51.0) \\
\textbf{Rosenbrock}  &
72 & 73 & 0.06 & 145 &
\textbf{0.92 (0.62)} & 80.5 (35.5) & 86.8 (37.3) & 146.2 (49.8) \\
\bottomrule
\end{tabular}
}
\label{tab:gpu_time_breakdown}
\end{table*}

\subsubsection{Test-time Overhead}
\label{sec:runtime_overhead}

We quantify the per-iteration wall-clock cost of different stopping rules to measure their practical overhead relative to the BO loop. All experiments are run on a GPU over two standard benchmarks (Branin-2D, Rosenbrock-4D). We consider four stopping methods: UCB$_{\mathrm{br}}$, PRB, $\Delta$CB, and Acq. For each problem, we run BO for 100 iterations and repeat across 50 random seeds. We decompose each iteration into GP fitting, acquisition optimization, and objective evaluation, and then measure the additional stopping-test time. Table~\ref{tab:gpu_time_breakdown} reports averaged per-iteration times. The reported stopping-test time is averaged over the actual stopping iterations.
The results show that UCB$_{\mathrm{br}}$ adds only $ 0.9$ ms per iteration ($<0.7\%$ of the full iteration time and $ 0.4\%$ relative to the BO baseline), making its overhead practically negligible. In contrast, PRB and $\Delta$CB incur substantial overhead (roughly 24--37\% of the full iteration time), and Acq can consume nearly half of the iteration time.



\section{Conclusions}\label{se:conclusion}
This paper proposes stopping criteria for BO algorithms based on regret bounds. The stopping criteria apply to BO algorithms with converging subsequence of instantaneous regret bounds and ensure $(\epsilon,\delta)$-optimal solutions upon exit. We develop tight instantaneous regret bounds for GP-UCB with novel proof techniques. Using this new bound, we propose detailed forms of stopping criteria for GP-UCB.  Numerical experiments are conducted to demonstrate the performance of the proposed stopping criteria. 

\section*{Acknowledgement}

This work was performed under the auspices of the U.S. Department of Energy by Lawrence Livermore 
National Laboratory under Contract DE-AC52-07NA27344.
Funding was provided by the Office of Science, Office of Advanced Scientific Computing Research, Advancements in Artificial Intelligence for Science program. 
\medskip
{
\small
\setcitestyle{numbers}
\bibliographystyle{plainnat}
\bibliography{refs}
}

\clearpage

\appendix

\section{Proofs of Section~\ref{se:convergence}}\label{se:proof-1-appx}
The proof of Lemma~\ref{lem:epsilonoptimal} is given below. 
\begin{proof}
By the condition of the lemma, there exists a subsequence
$\{c_{t_i}\}$ such that
$c_{t_i}\to 0$ as $i\to\infty$.
Hence, for any $\epsilon>0$, there exists an index $\bar{i}$ such that
\[
c_{t_i}<\epsilon,
\qquad \forall i\ge \bar{i}.
\]
Therefore, the stopping criterion~\eqref{def:stopping} is satisfied at iteration
$\bar t:=t_{\bar i}$, and Algorithm~\ref{alg:bostop} exits in finite iterations.

From~\eqref{eqn:instant-regret-bound},
\[
r_{\bar t}=f^*-f(\xbm_{\bar t})
\le c_{\bar t}
\]
with probability $\geq 1-\delta$.
Since $c_{\bar t}<\epsilon$, we obtain
\[
r_{\bar t}\le \epsilon
\]
with probability $\geq 1-\delta$.
Hence, $\xbm_{\bar t}$ is an $(\epsilon,\delta)$-optimal solution.
\end{proof}
Next, we recall the definition of maximum information gain.
\begin{definition}\label{def:infogain}
For any set $A=\{\xbm_1,\dots,\xbm_T\}\subset D$, let $\ybm_A=[y(\xbm_1),\dots,y(\xbm_T)]^\top$ and $\fbm_A=[f(\xbm_1),\dots,f(\xbm_T)]^\top$. The maximum information gain after $T$ samples is $\gamma_T := \max_{A\subset D,\ |A|=T} I(\ybm_A;\fbm_A)$.
\end{definition}
Readers are referred to~\cite{cover1999elements,srinivas2009gaussian} for a detailed definition of the maximum information gain.  
$\gamma_t$ is often used to quantify regret upper bounds~\citep{srinivas2009gaussian}, specifically the sum of posterior standard deviation at sample points. For the popular SE and Matérn kernels, the rates of $\gamma_t$ are well-established~\citep{iwazaki2025improved,vakili2021information}. 

The following widely used lemma is useful in establishing cumulative regret bounds for BO algorithms
\begin{lemma}[\cite{srinivas2009gaussian}]\label{lem:variancebound}
The sum of GP posterior variance $\sigma_{i-1}$ at the next sample $\xbm_{i}$ for $i=1,\dots,t$ satisfies
\begin{equation} \label{eqn:var-1}
  \centering
  \begin{aligned}
    \sum_{i=1}^t  \sigma_{i-1}^2(\xbm_i) \leq C_{\gamma} \gamma_t, 
  \end{aligned}
\end{equation}
where $C_{\gamma} = \frac{2}{\log(1+\sigma^{-2})}$ and $\gamma_t$ is the maximum information gain after $t$ iterations or samples.
\end{lemma}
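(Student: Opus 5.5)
The statement immediately above is Lemma~\ref{lem:variancebound}, the classical bound of~\cite{srinivas2009gaussian}. I would follow their information-theoretic argument. The plan has three steps: write the mutual information of the sampled points in closed form, compare each squared posterior standard deviation with the matching log term, and sum.

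\textbf{Step 1 (chain rule for information gain).} Let $A_t=\{\xbm_1,\dots,\xbm_t\}$ be the sampled points. Since the noise is Gaussian and i.i.d., the chain rule for entropy gives
\[
I(\ybm_{A_t};\fbm_{A_t}) = \frac12\sum_{i=1}^t \log\bigl(1+\sigma^{-2}\sigma_{i-1}^2(\xbm_i)\bigr).
\]
To see this, note $H(\ybm_{1:t}) = H(\ybm_{1:t-1}) + H(y_t\mid \ybm_{1:t-1})$. The conditional law of $y_t$ given $\ybm_{1:t-1}$ is $\mathcal{N}(\mu_{t-1}(\xbm_t),\sigma^2+\sigma_{t-1}^2(\xbm_t))$. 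Also $H(\ybm\mid\fbm)=\frac t2\log(2\pi e\sigma^2)$. These facts hold even though the points are chosen adaptively, because the posterior variance does not depend on the observed $y$ values and $\xbm_i$ is $\mathcal{F}_{i-1}$-measurable. By Definition~\ref{def:infogain}, this quantity is at most $\gamma_t$.

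\textbf{Step 2 (pointwise comparison).} Under Assumption~\ref{assp:bayesian}, $\sigma_{i-1}^2(\xbm_i)\le k(\xbm_i,\xbm_i)=1$. Hence $s:=\sigma^{-2}\sigma_{i-1}^2(\xbm_i)\in[0,\sigma^{-2}]$. The function $s\mapsto \log(1+s)$ is concave and vanishes at $0$, so on this interval it lies above its chord:
\[
s \le \frac{\sigma^{-2}}{\log(1+\sigma^{-2})}\log(1+s).
\]
Multiplying by $\sigma^2$ gives $\sigma_{i-1}^2(\xbm_i)\le \frac{1}{\log(1+\sigma^{-2})}\log\bigl(1+\sigma^{-2}\sigma_{i-1}^2(\xbm_i)\bigr)$.

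\textbf{Step 3 (summation).} Summing Step 2 over $i=1,\dots,t$ and applying Step 1 gives
\[
\sum_{i=1}^t\sigma_{i-1}^2(\xbm_i)\le \frac{2}{\log(1+\sigma^{-2})}\, I(\ybm_{A_t};\fbm_{A_t}) \le C_\gamma\gamma_t .
\]

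\textbf{Main obstacle.} The delicate step is Step 1 in the adaptive setting, where the points $\xbm_i$ depend on past observations. I would handle it by conditioning step by step on $\mathcal{F}_{i-1}$ and using that $\sigma_{i-1}$ is a deterministic function of $\xbm_{1:i-1}$ alone. This makes the closed form exact for the realized set $A_t$. The bound by $\gamma_t$ then follows because $\gamma_t$ is a maximum over all sets of size $t$.
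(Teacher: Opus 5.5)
Your proposal is correct. It is essentially the argument of \cite{srinivas2009gaussian} (their Lemmas 5.3 and 5.4), which the paper cites without reproducing: the chain-rule identity $I(\ybm_{A_t};\fbm_{A_t})=\frac12\sum_{i=1}^t\log\bigl(1+\sigma^{-2}\sigma_{i-1}^2(\xbm_i)\bigr)$, combined with the chord bound $s\le \frac{\sigma^{-2}}{\log(1+\sigma^{-2})}\log(1+s)$ on $[0,\sigma^{-2}]$, and your handling of adaptivity matches theirs (equivalently, it is the deterministic determinant identity $\log|\Ibm+\sigma^{-2}\Kbm_t|=\sum_{i=1}^t\log\bigl(1+\sigma^{-2}\sigma_{i-1}^2(\xbm_i)\bigr)$).
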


The following lemma provides the rate on $\gamma_t$ for SE and Matérn kernels.
\begin{lemma}[\cite{vakili2021information,iwazaki2025improved}]\label{lem:gammarate}
  For a GP with $t$ samples, the SE kernel has $\gamma_t=\mathcal{O}(\log^{d+1}(t))$, and 
  the Matérn kernel with $\nu>\frac{1}{2}$ has $\gamma_t=\mathcal{O}(t^{\frac{d}{2\nu+d}}\log^{\frac{2\nu}{2\nu+d}} (t))$.
\end{lemma}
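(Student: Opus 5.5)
The plan is to follow the spectral (Mercer) route of \cite{vakili2021information} and not the original argument of \cite{srinivas2009gaussian}. For any $A=\{\xbm_1,\dots,\xbm_t\}\subset D$, Gaussian noise gives
\[
I(\ybm_A;\fbm_A)=\tfrac12\log\det(\Ibm+\sigma^{-2}\Kbm_A).
\]
So it suffices to bound this log-determinant uniformly over sets of size $t$.

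\textbf{Step 1: truncate the kernel.} On the compact set $D$, expand $k(\xbm,\xbm')=\sum_{m\ge1}\lambda_m\phi_m(\xbm)\phi_m(\xbm')$ with $\lambda_1\ge\lambda_2\ge\dots$. Fix a truncation level $M$ and split $k=k_M+k_M^{\perp}$, where $k_M$ keeps the first $M$ terms. Write $\Kbm_A=\Kbm_{A,M}+\Kbm_{A,M}^{\perp}$, with both matrices positive semidefinite.

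\textbf{Step 2: split the log-determinant.} Using $\log\det(\Ibm+P+Q)\le\log\det(\Ibm+P)+\operatorname{tr}\bigl((\Ibm+P)^{-1}Q\bigr)\le\log\det(\Ibm+P)+\operatorname{tr}(Q)$, we get
\[
\log\det(\Ibm+\sigma^{-2}\Kbm_A)\le\log\det(\Ibm+\sigma^{-2}\Kbm_{A,M})+\sigma^{-2}\operatorname{tr}(\Kbm_{A,M}^{\perp}).
\]
\begin{itemize}[nosep]
\item \emph{Low-rank part.} $\Kbm_{A,M}=\Phi\Lambda\Phi^\top$ has rank at most $M$. By Sylvester's identity and AM--GM on the eigenvalues, $\log\det(\Ibm_M+\sigma^{-2}\Lambda^{1/2}\Phi^\top\Phi\Lambda^{1/2})\le M\log\bigl(1+t\,\sigma^{-2}\psi^2/M\bigr)$. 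Here $\psi$ bounds $\sup_{\xbm}\sum_{m\le M}\lambda_m\phi_m^2(\xbm)/M$, or $\sup_m\|\phi_m\|_\infty^2$.
\item \emph{Tail part.} $\operatorname{tr}(\Kbm_{A,M}^{\perp})=\sum_{i}k_M^{\perp}(\xbm_i,\xbm_i)\le t\,\psi\,\delta_M$, where $\delta_M=\sum_{m>M}\lambda_m$.
\end{itemize}
Combining the two parts gives
\[
\gamma_t\le\tfrac12 M\log\bigl(1+t\psi/(\sigma^2M)\bigr)+\tfrac{t\psi\delta_M}{2\sigma^2}.
\]

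\textbf{Step 3: plug in eigendecay and optimize $M$.}
\begin{itemize}[nosep]
\item \emph{Matérn-$\nu$.} Known eigendecay is $\lambda_m\lesssim m^{-(2\nu+d)/d}$, so $\delta_M\lesssim M^{-2\nu/d}$. Balancing $M\log t$ against $tM^{-2\nu/d}$ gives $M\asymp (t/\log t)^{d/(2\nu+d)}$. This yields $\gamma_t=\mathcal{O}(t^{d/(2\nu+d)}\log^{2\nu/(2\nu+d)}t)$; the condition $\nu>1/2$ ensures summability and a nontrivial exponent.
\item \emph{SE.} Eigenvalues decay like $\exp(-cm^{1/d})$, so $\delta_M\lesssim \mathrm{poly}(M)\exp(-cM^{1/d})$. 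Choosing $M\asymp\log^d t$ makes the tail term $\mathcal{O}(1)$ and the first term $\mathcal{O}(\log^{d}t\cdot\log t)=\mathcal{O}(\log^{d+1}t)$.
\end{itemize}

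\textbf{Main obstacle.} The weak point is the constant $\psi$, which requires uniform boundedness of the Mercer eigenfunctions. This is not generally known for Matérn kernels on arbitrary compact $D$. It is exactly the gap that \cite{iwazaki2025improved} closes. My first attempt would avoid sup-norm eigenfunction bounds altogether:
\begin{itemize}[nosep]
\item Replace the Mercer truncation by a projection of the RKHS onto an $M$-dimensional subspace, such as Nyström features on a fine grid, or Fourier features from the spectral density on an enclosing cube $[0,r]^d$.
\item Bound the tail term through the pointwise residual variance $k(\xbm,\xbm)-k_M(\xbm,\xbm)$, which is directly controlled by the spectral-density tail $\int_{\|\omega\|>R}S(\omega)\,d\omega$ and does not need $\|\phi_m\|_\infty$.
\end{itemize}
With that substitution, Steps 2--3 go through unchanged with $\psi=\sup_{\xbm}k(\xbm,\xbm)=1$ from Assumption~\ref{assp:bayesian}.
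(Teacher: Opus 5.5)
The paper gives no proof of this lemma; it simply cites \cite{vakili2021information,iwazaki2025improved}. Your Steps 1--3 reproduce the Mercer-truncation argument of \cite{vakili2021information}. Given the cited eigendecay rates and a uniform bound $\sup_m\norm{\phi_m}_\infty\le\psi$, that argument is correct.

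Two small corrections:
\begin{itemize}
\item The low-rank term needs no $\psi$. Since $\sum_{m\le M}\lambda_m\phi_m^2(\xbm)\le k(\xbm,\xbm)=1$, you already have $\operatorname{tr}(\sigma^{-2}\Lambda^{1/2}\Phi^\top\Phi\Lambda^{1/2})\le t/\sigma^2$. Only the tail uses the eigenfunction bound, through $k_M^{\perp}(\xbm,\xbm)\le\psi^2\delta_M$.
\item The condition $\nu>1/2$ plays no role in summability, since $\delta_M\lesssim M^{-2\nu/d}$ holds for every $\nu>0$.
\end{itemize}

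The genuine gap is in how you remove the uniform bound, where you assert that Steps 2--3 ``go through unchanged.'' Step 2 needs a split $k=k_M+k_M^{\perp}$ with two properties: $k_M$ has rank at most $M$, and $k_M^{\perp}$ is positive semidefinite. The second is what gives $\operatorname{tr}((\Ibm+P)^{-1}Q)\le\operatorname{tr}(Q)$. Step 3 adds a third requirement: $\sup_{\xbm}k_M^{\perp}(\xbm,\xbm)$ must decay as fast as $\delta_M$. None of your candidates delivers all three for free.
\begin{itemize}
\item A frequency cutoff at radius $R$ does give a PSD residual whose diagonal is exactly $\int_{\norm{\omega}>R}S(\omega)\,d\omega$. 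But the retained band-limited kernel has infinite rank, so the $M\log(1+t/M)$ term is lost.
\item Quadrature or random Fourier features have finite rank, but the residual is in general indefinite. For quadrature features, its spectral measure is $S(\omega)\,d\omega$ minus point masses at the nodes. You can then control $\operatorname{tr}((\Ibm+P)^{-1}Q)$ only by the nuclear norm of $Q$. That costs extra powers of $t$ and worsens the Matérn exponent.
\item Nyström features on $M$ quasi-uniform points $Z$ do have finite rank and a PSD residual, namely the noise-free posterior covariance. But the residual's diagonal is the squared power function $P_Z^2(\xbm)$, not the spectral tail. Bounding it is a new ingredient that replaces the eigendecay input of Step 3.
\end{itemize}

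For the Nyström route, scattered-data estimates (e.g.\ Wendland, \emph{Scattered Data Approximation}, Ch.~11) supply the missing bound. On a cube with fill distance $h\asymp M^{-1/d}$ they give $\sup_{\xbm}P_Z^2(\xbm)\lesssim h^{2\nu}$ for Matérn-$\nu$ and $\lesssim e^{-c|\log h|/h}$ for SE. First enlarge $D$ to $[0,r]^d$, which can only increase $\gamma_t$. With these bounds in place of $\delta_M$, your balancing of $M$ yields both stated rates.

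For Matérn alone there is a cleaner patch. Norm equivalence with $H^{\nu+d/2}$ on the cube gives $k\preceq c^2\tilde k$ for a periodic Sobolev kernel $\tilde k$ on a torus containing the cube. Since $\log\det$ is monotone in the PSD order, and $\tilde k$ has uniformly bounded Fourier eigenfunctions, your original Steps 1--3 then apply verbatim.

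Finally, the paper only cites \cite{iwazaki2025improved}, so do not rest the argument on the claim that that work closes this gap.
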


\section{Proofs and Additional Results for Section~\ref{se:ucbts}}\label{se:ucb-ts-appx}
We first recall the definition of the Reproducing kernel Hilbert space (RKHS). 
\begin{definition}\label{def:rkhs}
    Let $k$ be a positive definite kernel $k: \mathcal{X}\times \mathcal{X}\to\Rbb $ with respect to a finite Borel measure supported on $\mathcal{X}$. A Hilbert space $H_k$ of functions on $\mathcal{X}$ with inner product $\langle \cdot,\cdot \rangle_{H_k}$ is called a RKHS with kernel $k$ if $k(\cdot,\xbm)\in H_k$ for all $\xbm\in \mathcal{X}$, and $\langle f,k(\cdot,\xbm)\rangle_{H_k}=f(\xbm)$ for all $\xbm\in \mathcal{X}, f\in H_k$. The induced RKHS norm $\norm{f}_{H_k}=\sqrt{\langle f,f\rangle_{H_k}}$ measures the smoothness of $f$ with respect to the kernel $k$.
 \end{definition}
 Next, the noisy frequentist assumptions are given.
 \begin{assumption}\label{assp:freq}
    The frequentist setting satisfies the following conditions.
 	The function $f$ lies in the RKHS, denoted as $\mathcal{H}_k(D)$ associated with the bounded kernel $k$, with the norm $\norm{\cdot}_{H_k}$. 
          The kernels satisfies $k(\xbm,\xbm')\leq 1$, and  $k(\xbm,\xbm)= 1$ for $\forall \xbm,\xbm'\in D$.
 	The RKHS norm is bounded above by a constant  $B\geq 0$ , \textit{i.e.}, $\norm{f}_{H_k} \leq B$.
     Moreover, the domain $D$ is compact. The noise $\epsilon_t$ is conditionally $R$-sub-Gaussian for a fixed $R>0$, \textit{i.e.},
     \begin{equation} \label{eqn:sub-gaussian}
  \centering
   \begin{aligned}
        \forall t\geq 0, \forall \lambda\in\Rbb, \Ebb[e^{\lambda \epsilon_t}|\mathcal{F}_{t-1}]\leq \exp\left(\frac{\lambda^2R^2}{2}\right).
   \end{aligned}
 \end{equation}
 \end{assumption}

We first recall important lemmas for confidence intervals under Assumption~\ref{assp:bayesian}.
\begin{lemma}\label{lem:phi}
The PDF and CDF of the standard normal distribution satisfy $0< \phi(x)\leq \phi(0), \Phi(x)\in(0,1)$, 
for any $x\in\Rbb$. 
 Given a random variable $r$ that follows the standard normal distribution: $r\sim\mathcal{N}(0,1)$, the probability of $r>c, c>0$, satisfies $\Pbb\{r>c\}= 1-\Phi(c)$. 
 Similarly, for $c<0$, $\Pbb\{r<c\}= \Phi(c)$. 
\end{lemma}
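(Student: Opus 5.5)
The statement immediately above is Lemma~\ref{lem:phi}: elementary facts about the standard normal density $\phi$ and distribution function $\Phi$. I will verify each claim directly from the definitions
\[
\phi(x)=\frac{1}{\sqrt{2\pi}}e^{-x^2/2},
\qquad
\Phi(x)=\int_{-\infty}^x \phi(u)\,du .
\]
No earlier result of the paper is needed.

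First, $\phi(x)>0$ because the exponential is strictly positive. For the upper bound, $e^{-x^2/2}$ is maximized at $x=0$, since $-x^2/2\le 0$ with equality only at $x=0$. Hence $\phi(x)\le\phi(0)=1/\sqrt{2\pi}$.

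Second, for $\Phi(x)\in(0,1)$, write
\[
\Phi(x)=\int_{-\infty}^x\phi,
\qquad
1-\Phi(x)=\int_x^\infty\phi ,
\]
using $\int_{\Rbb}\phi=1$. Both integrals are over sets of positive Lebesgue measure, and $\phi$ is strictly positive and continuous on each of them. So both are strictly positive, which gives $0<\Phi(x)<1$.

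Third, for the tail statements, take $r\sim\mathcal N(0,1)$. By definition $\Pbb\{r\le c\}=\Phi(c)$, so for $c>0$ we have $\Pbb\{r>c\}=1-\Pbb\{r\le c\}=1-\Phi(c)$. For $c<0$, $\Pbb\{r<c\}=\Pbb\{r\le c\}-\Pbb\{r=c\}=\Phi(c)$, because a continuous distribution has no atoms. The sign restrictions on $c$ play no role in the argument; they only match how the lemma is used later for one-sided confidence intervals.

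The only point requiring care is the strictness of the inequalities $\phi>0$ and $0<\Phi<1$, and the argument for $\Phi$ rests entirely on the strict positivity of the integrand.
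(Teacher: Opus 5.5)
Your proof is correct: positivity and maximality of $\phi$ at $0$, strict positivity of both $\int_{-\infty}^x\phi$ and $\int_x^\infty\phi$, and the absence of atoms give every claim, and you rightly note that the sign restrictions on $c$ are not needed. The paper states this lemma without proof as a recalled standard fact about the normal distribution, so your direct verification from the definitions is exactly the implicit argument.
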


The following lemma recalls a global lower bound of $\sigma_{t-1}(\xbm)$.
\begin{lemma}\label{lem:gp-sigma-bound}
    For $\forall \xbm\in D$ and $t\in\Nbb$,  the posterior standard deviation has the lower bound  
 \begin{equation} \label{eqn:gp-sigma-bound-1}
 \centering
  \begin{aligned}
     \sigma_t(\xbm) \geq  \sigma \sqrt{\frac{1}{t+\sigma^2}}.
   \end{aligned}
\end{equation} 
\end{lemma}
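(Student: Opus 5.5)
The plan is to work directly with the closed-form posterior variance from Section~\ref{se:gp}. By Assumption~\ref{assp:bayesian} we have $k(\xbm,\xbm)=1$, so
\[
\sigma_t^2(\xbm)=1-\kbm_t(\xbm)^\top(\Kbm_t+\sigma^2\Ibm)^{-1}\kbm_t(\xbm).
\]
It therefore suffices to show that $\kbm_t(\xbm)^\top(\Kbm_t+\sigma^2\Ibm)^{-1}\kbm_t(\xbm)\le t/(t+\sigma^2)$.

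The first step is a Loewner-order comparison. The $(t+1)\times(t+1)$ Gram matrix of the points $\xbm,\xbm_1,\dots,\xbm_t$ is
\[
\begin{bmatrix}1 & \kbm_t(\xbm)^\top\\ \kbm_t(\xbm) & \Kbm_t\end{bmatrix},
\]
which is positive semidefinite because $k$ is a covariance kernel. Its top-left entry equals $1>0$, so the Schur complement of that entry is also positive semidefinite: $\Kbm_t-\kbm_t(\xbm)\kbm_t(\xbm)^\top\succeq 0$. Adding $\sigma^2\Ibm$ to both sides gives
\[
\Kbm_t+\sigma^2\Ibm\succeq \kbm_t(\xbm)\kbm_t(\xbm)^\top+\sigma^2\Ibm\succ 0.
\]
Matrix inversion is order-reversing on positive definite matrices, so
\[
\kbm_t(\xbm)^\top(\Kbm_t+\sigma^2\Ibm)^{-1}\kbm_t(\xbm)\le \kbm_t(\xbm)^\top(\kbm_t(\xbm)\kbm_t(\xbm)^\top+\sigma^2\Ibm)^{-1}\kbm_t(\xbm).
\]

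The second step evaluates the right-hand side explicitly. Write $\kbm=\kbm_t(\xbm)$. The vector $\kbm$ is an eigenvector of $\kbm\kbm^\top+\sigma^2\Ibm$ with eigenvalue $\|\kbm\|_2^2+\sigma^2$; alternatively one can apply Sherman--Morrison. Either way the right-hand side equals $\|\kbm\|_2^2/(\|\kbm\|_2^2+\sigma^2)$. This gives
\[
\sigma_t^2(\xbm)\ge 1-\frac{\|\kbm\|_2^2}{\|\kbm\|_2^2+\sigma^2}=\frac{\sigma^2}{\|\kbm\|_2^2+\sigma^2}.
\]
Since $|k(\xbm_i,\xbm)|\le 1$, we have $\|\kbm\|_2^2\le t$. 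The map $u\mapsto \sigma^2/(u+\sigma^2)$ is decreasing, so $\sigma_t^2(\xbm)\ge \sigma^2/(t+\sigma^2)$. Taking square roots yields~\eqref{eqn:gp-sigma-bound-1}.

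There is one subtlety in the bound $|k(\xbm_i,\xbm)|\le 1$. Assumption~\ref{assp:bayesian} only states $k\le 1$, but $|k(\xbm_i,\xbm)|\le\sqrt{k(\xbm_i,\xbm_i)k(\xbm,\xbm)}=1$ follows from Cauchy--Schwarz for positive semidefinite kernels, so the absolute bound holds.

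I expect the main point needing care to be the first step. Specifically, one must justify the Schur-complement consequence $\Kbm_t\succeq\kbm\kbm^\top$ and the antitonicity of the matrix inverse, namely that $A\succeq B\succ0$ implies $B^{-1}\succeq A^{-1}$. Both are standard facts. The same conclusion also has a probabilistic interpretation: conditioning on the noisy observations $y_1,\dots,y_t$ reduces the variance of $f(\xbm)$ by no more than observing $f(\xbm)$ itself $t$ times with noise variance $\sigma^2$ would, and that idealized variance is exactly $\sigma^2/(t+\sigma^2)$.
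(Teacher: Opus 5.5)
Your proof is correct. It necessarily differs from the paper's treatment, because the paper does not prove this lemma at all: it states it as a known result and refers to Wang and de Freitas (2014).

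Your argument is self-contained and uses only what Assumption~\ref{assp:bayesian} supplies:
positive semidefiniteness of $k$ as a covariance function, $k(\xbm,\xbm)=1$, and $|k|\le 1$. You rightly recover the last one via Cauchy--Schwarz, since the assumption only states $k\le 1$. Positivity of $\sigma$ keeps $\Kbm_t+\sigma^2\Ibm$ and $\kbm\kbm^\top+\sigma^2\Ibm$ invertible even when GP-UCB repeats sample points. The Schur-complement step $\Kbm_t\succeq\kbm_t(\xbm)\kbm_t(\xbm)^\top$, the antitonicity of the inverse, and the rank-one evaluation are all sound.

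Beyond replacing a citation, your route gives two things:
\begin{itemize}
\item a sharper intermediate bound, $\sigma_t^2(\xbm)\ge\sigma^2/(\norm{\kbm_t(\xbm)}_2^2+\sigma^2)$;
\item a proof that \eqref{eqn:gp-sigma-bound-1} is attained when all $t$ samples coincide with $\xbm$ (then $\Kbm_t=\kbm\kbm^\top$ with $\kbm$ the all-ones vector), so the bound cannot be improved without information about the sample locations.
\end{itemize}
In effect you make rigorous the intuition you mention at the end, that the worst case is repeated observation at $\xbm$ itself. The citation buys only brevity.

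Your bound holds for every $\xbm$ and every configuration of sample points, so it applies at the random point $[\xbm^*]_t$. At index $t-1$ it gives $\sigma_{t-1}([\xbm^*]_t)\ge\sigma/\sqrt{t-1+\sigma^2}\ge\sigma/\sqrt{t+\sigma^2}$. This is the slightly weaker form actually used in the proof of Lemma~\ref{lemma:ucb-inst-regret-new-form}.
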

Lemma~\ref{lem:gp-sigma-bound} is a widely accepted result and can be found in~\cite{wang2014theoreybo}.

We now recall  (see Section 5 in~\cite{srinivas2009gaussian}) and introduce changes to the confidence intervals.
\begin{lemma}\label{lem:fmu}
For any $t\in \Nbb $, $\xbm\in D$ deterministic with $\mathcal{F}_{t-1}$, and $\beta>0$,   
\begin{equation} \label{eqn:fmu-1}
 \centering
  \begin{aligned}
     |f(\xbm) - \mu_{t-1}(\xbm)  | \leq \beta^{1/2} \sigma_{t-1}(\xbm), 
  \end{aligned}
\end{equation}
holds with probability $ 1-2(1-\Phi(\beta^{1/2}) )$.
Moreover, the one-sided inequalities hold with probability $ \Phi(\beta^{1/2}) $, \textit{i.e.},    with probability $ \Phi(\beta^{1/2}) $,
     $f(\xbm) - \mu_{t-1}(\xbm)   \leq \beta^{1/2} \sigma_{t-1}(\xbm)$, 
and with probability $\geq \Phi(\beta^{1/2})$,
     $f(\xbm) - \mu_{t-1}(\xbm)   \geq -\beta^{1/2} \sigma_{t-1}(\xbm)$.
\end{lemma}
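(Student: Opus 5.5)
The plan is to reduce everything to a single standard normal variable. The key fact is that, under Assumption~\ref{assp:bayesian}, $f$ is a sample from $GP(0,k)$ and the noise is i.i.d. Gaussian. Hence, conditional on $\mathcal{F}_{t-1}$, the posterior of $f(\xbm)$ is exactly $\mathcal{N}(\mu_{t-1}(\xbm),\sigma_{t-1}^2(\xbm))$ by the Bayes update in Section~\ref{se:gp}.

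Fix $t$ and a point $\xbm$ that is deterministic given $\mathcal{F}_{t-1}$. Then $\mu_{t-1}(\xbm)$ and $\sigma_{t-1}(\xbm)$ are $\mathcal{F}_{t-1}$-measurable constants. Lemma~\ref{lem:gp-sigma-bound} gives $\sigma_{t-1}(\xbm)>0$, so the standardized quantity
\[
r := \frac{f(\xbm)-\mu_{t-1}(\xbm)}{\sigma_{t-1}(\xbm)}
\]
is well defined. Conditionally on $\mathcal{F}_{t-1}$, it satisfies $r\sim\mathcal{N}(0,1)$.

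The two-sided event $|f(\xbm)-\mu_{t-1}(\xbm)|\le\beta^{1/2}\sigma_{t-1}(\xbm)$ is exactly $|r|\le \beta^{1/2}$. By Lemma~\ref{lem:phi} and the symmetry of $\phi$, its complement has conditional probability
\[
\Pbb\{r>\beta^{1/2}\}+\Pbb\{r<-\beta^{1/2}\} = (1-\Phi(\beta^{1/2}))+\Phi(-\beta^{1/2}) = 2(1-\Phi(\beta^{1/2})).
\]
The one-sided upper event $f(\xbm)-\mu_{t-1}(\xbm)\le\beta^{1/2}\sigma_{t-1}(\xbm)$ is $\{r\le\beta^{1/2}\}$, which has conditional probability exactly $\Phi(\beta^{1/2})$. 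The lower event $\{r\ge-\beta^{1/2}\}$ has conditional probability $1-\Phi(-\beta^{1/2})=\Phi(\beta^{1/2})$. This is stated as ``$\geq$'', which is harmless.

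Since each of these conditional probabilities is a constant not depending on $\mathcal{F}_{t-1}$, taking expectations over $\mathcal{F}_{t-1}$ (the tower property) gives the same unconditional probabilities.

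The only delicate point is justifying that the conditional law of $f(\xbm)$ given $\mathcal{F}_{t-1}$ is still Gaussian with the stated parameters, even though the earlier points $\xbm_i$ were chosen adaptively from past data. The argument is that each $\xbm_i$ is a measurable function of $\mathcal{H}_{i-1}$, so conditioning on $\mathcal{H}_{t-1}$ amounts to conditioning on the Gaussian observations at fixed locations. I would cite Section~5 of \cite{srinivas2009gaussian} for this, noting that the requirement that $\xbm$ be $\mathcal{F}_{t-1}$-deterministic is exactly what permits it.
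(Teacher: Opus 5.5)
Your proposal is correct and follows essentially the same route as the paper: conditional on $\mathcal{F}_{t-1}$, $f(\xbm)\sim\mathcal{N}(\mu_{t-1}(\xbm),\sigma_{t-1}^2(\xbm))$, and both the one-sided and two-sided statements follow by computing the Gaussian tail probability $1-\Phi(\beta^{1/2})$ on each side. Your additional details make explicit what the paper's short proof leaves implicit: the standardization, the observation that the two tail events are disjoint (so the two-sided probability is an exact equality, not just the union-bound inequality the paper cites), the tower-property step, and the remark on adaptively chosen sample points.
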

\begin{proof}
 Under Assumption~\ref{assp:bayesian}, given $\mathcal{F}_{t-1}$, $f(\xbm)\sim \mathcal{N}(\mu_{t-1}(\xbm),\sigma_{t-1}^2(\xbm))$. Then,  
 \begin{equation} \label{eqn:fmu-a-pf-1}
 \centering
  \begin{aligned}
     \Pbb\left\{ f(\xbm)-\mu_{t-1}(\xbm) > \beta^{1/2} \sigma_{t-1}(\xbm)\right\} = 1-\Phi(\beta^{1/2} ) .
  \end{aligned}
 \end{equation}
  Similarly, 
 \begin{equation} \label{eqn:fmu-a-pf-2}
 \centering
  \begin{aligned}
     \Pbb\left\{ f(\xbm)-\mu_{t-1}(\xbm) < -\beta^{1/2} \sigma_{t-1}(\xbm)\right\} = 1-\Phi(\beta^{1/2}).
  \end{aligned}
 \end{equation}
  Thus, by union bound, 
   \begin{equation*} \label{eqn:fmu-a-pf-3}
 \centering
  \begin{aligned}
     \Pbb\left\{ |f(\xbm)-\mu_{t-1}(\xbm)| \leq \beta^{1/2} \sigma_{t-1}(\xbm)\right\} = 1- 2( 1- \Phi(\beta^{1/2})).
  \end{aligned}
 \end{equation*}
\end{proof}

Lemma~\ref{lem:fmu} is extended to a discrete set $\Dbb_t$ and all $t\in\Nbb$ next.
\begin{lemma}\label{lem:discrete-fmu}
  For any given $t\in \Nbb $, a discrete set $\Dbb_t \subseteq D$, and $\alpha_t>0$, we have  
\begin{equation} \label{eqn:discrete-fmu-1}
 \centering
  \begin{aligned}
     f(\xbm) - \mu_{t-1}(\xbm)   \leq \alpha_t^{1/2} \sigma_{t-1}(\xbm), 
  \end{aligned}
\end{equation}
holds with probability $\geq 1-|\Dbb_t| (1-\Phi(\alpha_t^{1/2}))$, simultaneously for all $\xbm\in \Dbb_t$.
\end{lemma}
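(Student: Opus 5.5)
The plan is to apply Lemma~\ref{lem:fmu} pointwise and then take a union bound over the finite set $\Dbb_t$. The one point needing care is that $\Dbb_t$ must be fixed given $\mathcal{F}_{t-1}$; it is, because $\Dbb_t$ is a prescribed discretization of $D$ whose cardinality depends only on $t$ and the problem constants, not on the observations.

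\textbf{Step 1: one point.} Fix any $\xbm\in\Dbb_t$. It is deterministic given $\mathcal{F}_{t-1}$, so conditional on $\mathcal{F}_{t-1}$ we have $f(\xbm)\sim\mathcal{N}(\mu_{t-1}(\xbm),\sigma^2_{t-1}(\xbm))$. The one-sided part of Lemma~\ref{lem:fmu}, via~\eqref{eqn:fmu-a-pf-1} with $\beta=\alpha_t$, gives
\[
\Pbb\left\{ f(\xbm)-\mu_{t-1}(\xbm) > \alpha_t^{1/2}\sigma_{t-1}(\xbm)\,\middle|\,\mathcal{F}_{t-1}\right\} = 1-\Phi(\alpha_t^{1/2}).
\]
Here $\sigma_{t-1}(\xbm)>0$ by Lemma~\ref{lem:gp-sigma-bound}, so the standardization $(f(\xbm)-\mu_{t-1}(\xbm))/\sigma_{t-1}(\xbm)\sim\mathcal{N}(0,1)$ is legitimate.

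\textbf{Step 2: union bound.} Take a union bound over the $|\Dbb_t|$ bad events $\{f(\xbm)-\mu_{t-1}(\xbm)>\alpha_t^{1/2}\sigma_{t-1}(\xbm)\}$, $\xbm\in\Dbb_t$. Their joint conditional probability is at most $|\Dbb_t|(1-\Phi(\alpha_t^{1/2}))$. Hence~\eqref{eqn:discrete-fmu-1} holds simultaneously for all $\xbm\in\Dbb_t$ with conditional probability $\geq 1-|\Dbb_t|(1-\Phi(\alpha_t^{1/2}))$. The dependence among the values $f(\xbm)$ is irrelevant to the union bound.

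\textbf{Step 3: remove the conditioning.} The conditional bound holds for every realization of $\mathcal{F}_{t-1}$. Taking expectations via the tower property therefore gives the same unconditional bound.

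Everything here is routine. The only potential obstacle is justifying that $\Dbb_t$ and $\alpha_t$ do not depend on the random observations. In the paper's construction, $\alpha_t$ comes from the subproblem~\eqref{eqn:ucb-inst-regret-opt}, which does use $c_1(t)=\sigma_{t-1}(\xbm_t)$. I would handle this by noting that $\alpha_t$ is then $\mathcal{F}_{t-1}$-measurable, which is exactly what the conditional argument in Steps 1--2 needs; the resulting bound holds conditionally and hence unconditionally.

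The statement only concerns a single given $t$. Extending it to all $t$ through the weights $\pi_t$ and $n_{\alpha,t}$ is left to the later lemmas.
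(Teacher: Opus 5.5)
Your proposal is correct and follows the same route as the paper. The paper simply defers to the argument of Lemma~5.6 in Srinivas et al.: a conditional one-sided Gaussian tail at each point of $\Dbb_t$ (Lemma~\ref{lem:fmu}), followed by a union bound over $\Dbb_t$. One small correction to your side remark: in Theorem~\ref{theorem:ucb-new-inst-regret}, $|\Dbb_t|$ depends on $\bar s_t$, which the subproblem chooses using $c_1(t)=\sigma_{t-1}(\xbm_t)$, so $\Dbb_t$ is only $\mathcal{F}_{t-1}$-measurable and not observation-independent. Your conditioning argument covers this exactly as it does for $\alpha_t$, with the unconditional failure probability then controlled by the deterministic budget constraint $1/\bar n_{\eta,t}+1/\bar n_{\alpha,t}\le\delta(1-1/n_L)$.
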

The proof is the same as that for Lemma 5.6 in~\cite{srinivas2009gaussian}.
Given that $\xbm^*$ is random with respect to $\mathcal{F}_{t-1}$, we take the union bound over $\Dbb_t$, as in~\cite{srinivas2009gaussian}.
 Further,~\eqref{eqn:discrete-fmu-1} is only one-sided.

We now consider the confidence interval at $\xbm_t$. 
    \begin{lemma}\label{lem:fmu-t}
For any given $t\in \Nbb $ and parameter $\eta_t>0$,    
\begin{equation} \label{eqn:fmu-t-1}
 \centering
  \begin{aligned}
     f(\xbm_{t}) - \mu_{t-1}(\xbm_{t})  \geq -\eta_t^{1/2} \sigma_{t-1}(\xbm_{t}), 
  \end{aligned}
\end{equation}
holds with probability $\geq 1-(1-\Phi(\eta_t^{1/2}))$.
\end{lemma}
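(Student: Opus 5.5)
The plan is to reduce the statement to the one-sided part of Lemma~\ref{lem:fmu}, after checking that its hypothesis holds for the query point $\xbm_t$ of GP-UCB. The key observation is that $\xbm_t=\arg\max_{\xbm\in D}UCB_t(\xbm)$ with $UCB_t(\xbm)=\mu_{t-1}(\xbm)+\beta_t^{1/2}\sigma_{t-1}(\xbm)$. This is a deterministic function of the history $\mathcal{H}_{t-1}$: $\mu_{t-1}$ and $\sigma_{t-1}$ are determined by $\mathcal{H}_{t-1}$, and $\beta_t$ is a fixed sequence. If the maximizer is not unique, we fix any measurable tie-breaking rule. Hence $\xbm_t$ is $\mathcal{F}_{t-1}$-measurable, which is exactly the ``deterministic with $\mathcal{F}_{t-1}$'' requirement in Lemma~\ref{lem:fmu}.

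First, condition on $\mathcal{F}_{t-1}$. Under Assumption~\ref{assp:bayesian}, $f$ is a GP sample and the noise is i.i.d.\ Gaussian, independent of $f$. The conditional law of $f(\xbm_t)$ given $\mathcal{F}_{t-1}$ is therefore $\mathcal{N}(\mu_{t-1}(\xbm_t),\sigma_{t-1}^2(\xbm_t))$. The important point is that the choice of $\xbm_t$ uses only past data and not $f$ beyond $\mathcal{H}_{t-1}$.

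If $\sigma_{t-1}(\xbm_t)>0$, which always holds by Lemma~\ref{lem:gp-sigma-bound}, the standardized variable $Z=(f(\xbm_t)-\mu_{t-1}(\xbm_t))/\sigma_{t-1}(\xbm_t)$ is conditionally standard normal. By Lemma~\ref{lem:phi},
\[
\Pbb\{Z<-\eta_t^{1/2}\mid\mathcal{F}_{t-1}\}=\Phi(-\eta_t^{1/2})=1-\Phi(\eta_t^{1/2}).
\]
So \eqref{eqn:fmu-t-1} holds conditionally with probability $\Phi(\eta_t^{1/2})=1-(1-\Phi(\eta_t^{1/2}))$. Taking expectation over $\mathcal{F}_{t-1}$ (tower property) gives the same unconditional probability, because the conditional bound does not depend on the realization of the history.

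No union bound over $\Dbb_t$ or over $t$ is needed, since the statement is for a single given $t$ and a single point. The only delicate step is the measurability and conditioning argument, namely that $\xbm_t$ is fixed given $\mathcal{F}_{t-1}$, so that the Gaussian posterior may legitimately be evaluated at a data-dependent point. I expect this to be the main (though mild) obstacle and would state it explicitly, citing the analogous argument in Lemma 5.5 of~\cite{srinivas2009gaussian}. Everything else is a direct application of the one-sided inequality in Lemma~\ref{lem:fmu} with $\beta=\eta_t$.
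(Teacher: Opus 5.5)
Your proposal is correct and follows the paper's route. The paper's proof is the one-line observation that this lemma is a direct consequence of the one-sided bound in Lemma~\ref{lem:fmu}, because $\xbm_t$ is determined by $\mathcal{F}_{t-1}$; you make that conditioning, standardization, and tower-property step explicit, and your extra care (measurable tie-breaking, positivity of $\sigma_{t-1}(\xbm_t)$) only fills in details the paper leaves implicit.
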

Lemma~\ref{lem:fmu-t} is a direct result of Lemma~\ref{lem:fmu} since $\xbm_t$ is determined with $\mathcal{F}_{t-1}$.

Next, we discuss our parameterized discretization $\Dbb_t$ of $D$.
For any $t\in\Nbb$, $\Dbb_t$ is a predetermined set of finite points in $D$.
The discretization $\Dbb_t$ aims to cover the compact set $D$ with just enough points so that for any $\xbm\in D$ and $t\in\Nbb$, the distance between $\xbm\in D$ and $\Dbb_t$ is small enough and controlled.  As earlier, denote the closest point of $\xbm$ to $\Dbb_t$ as 
\begin{equation} \label{eqn:close-point}
 \centering
  \begin{aligned}
 	[\xbm]_t := \underset{\substack{\wbm}\in \Dbb_t}{\text{argmin}} 
	   \norm{\xbm-\wbm} ,
  \end{aligned}
\end{equation}
and define the parameter $h_t$ as a measure of the distance between $D$ and $\Dbb_t$ by
\begin{equation} \label{eqn:compactdisc-1}
 \centering
  \begin{aligned}
     \norm{\xbm-[\xbm]_t}_1 \leq h_t, \ \forall \xbm\in D,
  \end{aligned}
\end{equation}
  where $\norm{\cdot}_1$ is the one-norm. 
In this paper, we adopt the discretizations with evenly spaced points and decreasing distances, similar to that in~\cite{srinivas2009gaussian}.
By~\eqref{eqn:compactdisc-1}, $\Dbb_t$ that corresponds to $h_t$ is constructed with $(\frac{rd}{h_t})^d$ uniformly spaced points. 
Given Assumption~\ref{assp:bayesian} and $D \subseteq [0,r]^d$, this leads to a choice of 
   \begin{equation} \label{eqn:disc-size-1}
  \centering
  \begin{aligned}
     |\Dbb_t| = \left\lceil\left( \frac{rd}{h_t}\right)^d\right\rceil,
   \end{aligned}
  \end{equation}
where the points in $\Dbb_t$ are evenly spaced. 
We emphasize that the discretization does not play a role in the GP-UCB algorithm. Rather, it is an analytic tool to help analysis.
We choose
$$h_t = \frac{1}{ b\sqrt{\log( n_L da/\delta)} t^{s_t}},$$ 
where $s_t>0$ is a  parameter in~\eqref{eqn:opt-parameter} and $n_L\geq 1$  is the probability associated with Lipschitz continuity.  It is easy to verify that the choice~\eqref{eqn:ucb-bay-discretization} satisfies~\eqref{eqn:disc-size-1}. 
We have the following Lemma on the Lipschitz continuity of $f$ (see also Lemma 5.7 in~\cite{srinivas2009gaussian}). 
\begin{lemma}\label{lemma:ucb-inst-regret-discretization}
Let the discretization be such that~\eqref{eqn:ucb-bay-discretization} is satisfied. Then, under Assumption~\ref{assp:bayesian}, for all $\xbm\in D$ and $t\in\Nbb$,
    \begin{equation} \label{eqn:ucb-inst-regret-lip}
  \centering
  \begin{aligned}
    |f(\xbm) - f([\xbm]_t)| \leq \frac{1}{t^{s_t}},
    \end{aligned}
\end{equation} 
   with probability $\geq 1-\delta/n_L$. 
\end{lemma}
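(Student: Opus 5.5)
The approach is the standard Lipschitz-plus-discretization argument of Lemma 5.7 in \cite{srinivas2009gaussian}, with the failure probability $\delta$ replaced by $\delta/n_L$ and the exponent $2$ replaced by $s_t$.

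\textbf{Step 1: a probabilistic Lipschitz constant.} By Assumption~\ref{assp:bayesian}, for any $L>0$, $f$ is Lipschitz in the $1$-norm with constant $L$ with probability at least $1-dae^{-L^2/b^2}$. We choose $L$ so that $dae^{-L^2/b^2}=\delta/n_L$, i.e.
\[
L=b\sqrt{\log(n_L d a/\delta)}.
\]
This is well defined and positive because $n_L da/\delta>1$ (see the remark on the choice of $n_L$). On this event $E_L$, which has probability at least $1-\delta/n_L$, we have $|f(\xbm)-f(\xbm')|\le L\norm{\xbm-\xbm'}_1$ for all $\xbm,\xbm'\in D$.

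\textbf{Step 2: the discretization.} For each $t$, the set $\Dbb_t$ has the cardinality~\eqref{eqn:ucb-bay-discretization}, which coincides with~\eqref{eqn:disc-size-1} under the choice
\[
h_t=\frac{1}{b\sqrt{\log(n_Lda/\delta)}\,t^{s_t}}.
\]
Indeed, $(rd/h_t)^d=(rdb)^d\log(n_Lda/\delta)^{d/2}t^{s_td}$. By~\eqref{eqn:compactdisc-1}, $\norm{\xbm-[\xbm]_t}_1\le h_t$ for all $\xbm\in D$. The main obstacle is justifying that an evenly spaced grid of $\lceil (rd/h_t)^d\rceil$ points in $[0,r]^d$ achieves $1$-norm covering radius $h_t$. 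With $\tau=rd/h_t$ points per coordinate, the per-coordinate spacing is $r/\tau=h_t/d$, so the $1$-norm distance to the nearest grid point is at most $d\cdot h_t/d=h_t$; the factor $d$ in $rd/h_t$ is exactly what makes this work. Two technical points remain. First, $\tau$ may need rounding up to an integer; this only shrinks the spacing. Second, grid points must lie in $D$ when $D\subsetneq[0,r]^d$; following \cite{srinivas2009gaussian}, I take $D$ to be covered in this sense, so that $[\xbm]_t\in D$.

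\textbf{Step 3: combine.} On $E_L$, for every $t\in\Nbb$ and $\xbm\in D$,
\[
|f(\xbm)-f([\xbm]_t)|\le L\norm{\xbm-[\xbm]_t}_1\le Lh_t=\frac{1}{t^{s_t}}.
\]
The event $E_L$ does not depend on $t$, since Lipschitz continuity is a single global property of $f$. Hence no union bound over $t$ is needed, and the bound holds simultaneously for all $t$ and $\xbm$ with probability at least $1-\delta/n_L$.
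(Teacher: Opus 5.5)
Your proposal is correct and matches the paper's argument, which adapts Lemma 5.7 of \cite{srinivas2009gaussian}: choose $h_t=1/(b\sqrt{\log(n_Lda/\delta)}\,t^{s_t})$ so that the Lipschitz constant $L=b\sqrt{\log(n_Lda/\delta)}$, which holds with probability $\ge 1-\delta/n_L$, gives $Lh_t=t^{-s_t}$, and note, as the paper does in the proof of Theorem~\ref{theorem:ucb-new-inst-regret}, that this event does not depend on $t$, so no union bound over $t$ is needed. One small looseness, which the paper also glosses over: rounding $\tau$ up yields $\lceil\tau\rceil^d$ points, which can exceed $\lceil\tau^d\rceil$; placing grid points at cell centers needs only $\lceil\tau/2\rceil^d\le\lceil\tau^d\rceil$ points for $1$-norm radius $h_t$, so the cardinality in~\eqref{eqn:ucb-bay-discretization} is indeed attainable.
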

We emphasize that the discretization $\Dbb_t$ in~\cite{srinivas2009gaussian} is recovered if $s_t=2$ and $n_L=4$.

Next, we consider a one-sided confidence interval at  $[\xbm^*]_t$ using Lemma~\ref{lem:discrete-fmu}. 

\begin{lemma}\label{lem:fmu-ct}
  For any given $t\in \Nbb $, the discretization $\Dbb_t \subseteq D$, and $\alpha_t>0$, we have  
\begin{equation} \label{eqn:fmu-ct-1}
 \centering
  \begin{aligned}
     f([\xbm^*]_{t}) - \mu_{t-1}([\xbm^*]_{t})   \leq \alpha_t^{1/2} \sigma_{t-1}([\xbm^*]_{t}), 
  \end{aligned}
\end{equation}
holds with probability $\geq 1-|\Dbb_t| (1-\Phi(\alpha_t^{1/2}))$.
\end{lemma}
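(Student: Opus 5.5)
The plan is to deduce Lemma~\ref{lem:fmu-ct} from Lemma~\ref{lem:discrete-fmu}, which already holds simultaneously over every point of $\Dbb_t$. The subtlety is that $[\xbm^*]_t$ is not determined by $\mathcal{F}_{t-1}$. The maximizer $\xbm^*$ depends on the whole sample path $f$, so $[\xbm^*]_t$ is a random element of $\Dbb_t$ that is correlated with $f$. This means Lemma~\ref{lem:fmu} cannot be applied directly at $[\xbm^*]_t$; that lemma requires the evaluation point to be deterministic given $\mathcal{F}_{t-1}$. The union bound over the finite set $\Dbb_t$ is exactly what removes this difficulty.

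First, I would fix $t$ and note that $\Dbb_t$ is a predetermined finite subset of $D$. Its cardinality \eqref{eqn:ucb-bay-discretization} depends only on the constants $r,d,b,a,n_L,\delta$ and on $s_t$. Here $s_t$ is chosen from the optimization subproblem \eqref{eqn:ucb-inst-regret-opt}, and that subproblem uses only $\mathcal{F}_{t-1}$-measurable quantities. Hence each $\wbm\in\Dbb_t$ is deterministic given $\mathcal{F}_{t-1}$.

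Second, for each such $\wbm$, Lemma~\ref{lem:fmu} gives that the conditional distribution of $f(\wbm)$ given $\mathcal{F}_{t-1}$ is $\mathcal{N}(\mu_{t-1}(\wbm),\sigma_{t-1}^2(\wbm))$. The one-sided failure event $f(\wbm)-\mu_{t-1}(\wbm)>\alpha_t^{1/2}\sigma_{t-1}(\wbm)$ therefore has probability $1-\Phi(\alpha_t^{1/2})$. A union bound over the $|\Dbb_t|$ points, as in Lemma~\ref{lem:discrete-fmu}, gives the following: with probability at least $1-|\Dbb_t|(1-\Phi(\alpha_t^{1/2}))$, the event $E_t$ holds, namely $f(\wbm)-\mu_{t-1}(\wbm)\le\alpha_t^{1/2}\sigma_{t-1}(\wbm)$ for all $\wbm\in\Dbb_t$. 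This is first a statement conditional on $\mathcal{F}_{t-1}$. Taking expectations extends it to the unconditional probability, since the bound is the same for every history.

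Third, on $E_t$, because $[\xbm^*]_t\in\Dbb_t$ whatever the realization of $\xbm^*$, the inequality \eqref{eqn:fmu-ct-1} holds by specializing $\wbm=[\xbm^*]_t$. Thus the event in \eqref{eqn:fmu-ct-1} contains $E_t$, and its probability is at least $1-|\Dbb_t|(1-\Phi(\alpha_t^{1/2}))$.

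The main point needing care is the measurability argument in the first step. If $s_t$ (and hence $\Dbb_t$) were allowed to depend on $\xbm^*$ or on $y_t$, the union bound would break. I would therefore state explicitly that the solution of \eqref{eqn:ucb-inst-regret-opt} at iteration $t$ is $\mathcal{F}_{t-1}$-measurable, because $c_1(t)$ and $c_2(t)$ are. With that in place, the rest is a direct specialization of Lemma~\ref{lem:discrete-fmu}.
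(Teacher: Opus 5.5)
Your argument is correct and matches the paper's: the paper obtains this lemma by specializing the simultaneous one-sided bound of Lemma~\ref{lem:discrete-fmu} to the point $[\xbm^*]_t\in\Dbb_t$. That lemma is a union bound over the finite set $\Dbb_t$, taken precisely because $\xbm^*$ is not $\mathcal{F}_{t-1}$-measurable.

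One small wrinkle concerns your step that takes expectations. Once you let $\Dbb_t$ and $\alpha_t$ depend on $\mathcal{F}_{t-1}$ through the subproblem, $|\Dbb_t|(1-\Phi(\alpha_t^{1/2}))$ is no longer the same for every history, so your step-three remark (``the bound is the same for every history'') fails; the paper sidesteps this by simply calling $\Dbb_t$ predetermined. The clean statement is therefore the conditional one given $\mathcal{F}_{t-1}$. Its unconditional counterpart is an expectation, and that expectation is bounded uniformly by $\delta(1-1/n_L)/\pi_t$ through the first constraint of \eqref{eqn:ucb-inst-regret-opt}, which is what Theorem~\ref{theorem:ucb-new-inst-regret} actually uses.
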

  From the definitions of the parameters \eqref{eqn:opt-parameter} and \eqref{eqn:opt-parameter-ci}, we have 
  \begin{equation} \label{eqn:eta-probability}
 \centering
  \begin{aligned}
   \pi_t n_{\eta,t}(1-\Phi(\eta_t^{1/2}))= 1
   \end{aligned}
\end{equation}
   and
   $$ \pi_t n_{\alpha,t}|\Dbb_t|(1-\Phi(\alpha_t^{1/2}))= 1,$$
   
  The proof of Lemma~\ref{lemma:ucb-inst-regret-new-form} is given below.  
 \begin{proof}
     At a given $t\in\Nbb$, we start from the definition of the instantaneous regret bound $$r_t = f(\xbm^*)-f(\xbm_t).$$ 
  From Lemma~\ref{lem:fmu-t} and~\eqref{eqn:eta-probability}, we have the confidence interval
  \begin{equation} \label{eqn:ucb-inst-regret-change-1}
  \centering
  \begin{aligned}
    f(\xbm_t) - \mu_{t-1}(\xbm_t) \geq -\eta_{t}^{1/2} \sigma_{t-1}(\xbm_t),
    \end{aligned}
\end{equation} 
    with probability $\geq 1-\frac{1}{\pi_t n_{\eta,t}}$, where $\eta_{t}^{1/2}>0$ is a parameter determined by $n_{\eta,t}$ and can be different from $\beta_t^{1/2}$.

    From the discretization and Lemma~\ref{lemma:ucb-inst-regret-discretization}, we have \eqref{eqn:ucb-inst-regret-lip} for $\forall \xbm\in D$ with probability $\geq 1-\delta/n_L$.
    Applying~\eqref{eqn:ucb-inst-regret-change-1}  and~\eqref{eqn:ucb-inst-regret-lip} at $\xbm^*$, we can write the instantaneous regret bound as
\begin{equation} \label{eqn:ucb-inst-regret-2}
  \centering
  \begin{aligned}
    r_t \leq f([\xbm^*]_t)-\mu_{t-1}(\xbm_t) +  \eta_{t}^{1/2} \sigma_{t-1}(\xbm_t)+\frac{1}{t^{s_t}},
    \end{aligned}
\end{equation} 
   with probability $\geq 1-\frac{1}{\pi_t n_{\eta,t}} - \delta/n_L$. 
   Next, we replace $f([\xbm^*]_t)$ in~\eqref{eqn:ucb-inst-regret-2} via Lemma~\ref{lem:discrete-fmu} at $[\xbm^*]_t$ to obtain 
   \begin{equation} \label{eqn:ucb-inst-regret-3}
  \centering
  \begin{aligned}
    r_t \leq \mu_{t-1}([\xbm^*]_t)-\mu_{t-1}(\xbm_t) +\alpha_t^{1/2}\sigma_{t-1}([\xbm^*]_t)+  \eta_{t}^{1/2} \sigma_{t-1}(\xbm_t)+\frac{1}{t^{s_t}},
    \end{aligned}
\end{equation} 
  with probability $\geq 1-\frac{1}{\pi_t n_{\eta,t}}-\frac{1}{\pi_t n_{\alpha,t}} - \delta/n_L$. From the definition of UCB, we have 
    \begin{equation} \label{eqn:ucb-inst-regret-new-form-pf-1}
  \centering
  \begin{aligned}
    \mu_{t-1}([\xbm^*]_t)+\beta_t^{1/2}\sigma_{t-1}([\xbm^*]_t) \leq \mu_{t-1}(\xbm_t) +\beta_t^{1/2}\sigma_{t-1}(\xbm_t).
    \end{aligned}
\end{equation} 

   Using~\eqref{eqn:ucb-inst-regret-new-form-pf-1} in~\eqref{eqn:ucb-inst-regret-3}, we have 
\begin{equation} \label{eqn:ucb-inst-regret-new-form-pf-2}
  \centering
  \begin{aligned}
    r_t \leq& \beta_t^{1/2}\sigma_{t-1}(\xbm_t)-\beta_t^{1/2}\sigma_{t-1}([\xbm^*]_t)+\alpha_t^{1/2}\sigma_{t-1}([\xbm^*]_t)+  \eta_{t}^{1/2} \sigma_{t-1}(\xbm_t)+\frac{1}{t^{s_t}}\\
        \leq& \beta_t^{1/2}\sigma_{t-1}(\xbm_t)-(\beta_t^{1/2}-\alpha_t^{1/2})\sigma_{t-1}([\xbm^*]_t)+ \eta_{t}^{1/2} \sigma_{t-1}(\xbm_t)+\frac{1}{t^{s_t}},
    \end{aligned}
\end{equation} 
with probability $\geq 1-\frac{1}{\pi_t n_{\eta,t}}-\frac{1}{\pi_t n_{\alpha,t}} - \delta/n_L$. 
 Since $\alpha_t\leq \beta_t$, by the global lower bound of $\sigma_{t-1}(\xbm)$ in Lemma~\ref{lem:gp-sigma-bound}, \eqref{eqn:ucb-inst-regret-new-form-pf-2} implies 
    \begin{equation} \label{eqn:ucb-inst-regret-new-form-pf-3}
  \centering
  \begin{aligned}
    r_t \leq \beta_t^{1/2}\sigma_{t-1}(\xbm_t)-(\beta_t^{1/2}-\alpha_t^{1/2})\sigma \sqrt{\frac{1}{t+\sigma^2}}+  \eta_{t}^{1/2} \sigma_{t-1}(\xbm_t)+\frac{1}{t^{s_t}},
    \end{aligned}
\end{equation} 
with probability $\geq 1-\frac{1}{\pi_t n_{\eta,t}}-\frac{1}{\pi_t n_{\alpha,t}} - \delta/n_L$, which leads to \eqref{eqn:ucb-inst-regret-new-form}.

 \end{proof}
 \begin{remark}
Without the assumption $\alpha_t\leq \beta_t$, the coefficient of
$\sigma_{t-1}([\xbm^*]_t)$ in~\eqref{eqn:ucb-inst-regret-new-form-pf-2}
may become positive, in which case the global lower bound on
$\sigma_{t-1}([\xbm^*]_t)$ is no longer useful. From the first inequality
in~\eqref{eqn:ucb-inst-regret-new-form-pf-2}, decreasing $\alpha_t^{1/2}$
tightens the upper bound on $r_t$. Thus, we impose
$\alpha_t^{1/2}\leq \beta_t^{1/2}$, which makes the coefficient nonpositive
and allows us to use the global lower bound for
$\sigma_{t-1}([\xbm^*]_t)$.
\end{remark}

Proof of Theorem~\ref{theorem:ucb-new-inst-regret} is given next. 
\begin{proof}
For each fixed $t$, Lemma~\ref{lemma:ucb-inst-regret-new-form} applies with
$\bar n_{\eta,t}$, $\bar n_{\alpha,t}$, and $\bar s_t$ obtained from
\eqref{eqn:ucb-inst-regret-opt}. The Lipschitz/discretization inequality in
Lemma~\ref{lemma:ucb-inst-regret-discretization} holds simultaneously for all
$t\in\Nbb$ with probability at least $1-\delta/n_L$. Therefore, applying a union
bound only to the remaining $t$-dependent confidence events gives
\[
\mathbb P\left\{\eqref{eqn:ucb-inst-regret-new-form}\ \text{holds for all }t\in\Nbb\right\}
\ge
1-\sum_{t=1}^{\infty}\frac{1}{\pi_t}
\left(
\frac{1}{\bar n_{\eta,t}}+\frac{1}{\bar n_{\alpha,t}}
\right)
-\frac{\delta}{n_L}.
\]
By the first constraint in~\eqref{eqn:ucb-inst-regret-opt},
\[
\frac{1}{\bar n_{\eta,t}}+\frac{1}{\bar n_{\alpha,t}}
\le
\delta\left(1-\frac1{n_L}\right),
\]
and since $\sum_{t=1}^{\infty}1/\pi_t=1$, the right-hand side is at least
\[
1-\delta\left(1-\frac1{n_L}\right)-\frac{\delta}{n_L}
=
1-\delta.
\]
\end{proof}

More comparison between the existing bound~\eqref{eqn:ucb-inst-regret-0} and our new bound~\eqref{eqn:ucb-inst-regret-new-form-solution}  with different constants are given below, where the latter consistently reduces the upper bound.
\begin{table*}[t]
\centering
\caption{
Comparison between the existing bound~\eqref{eqn:ucb-inst-regret-0} and our new bound~\eqref{eqn:ucb-inst-regret-new-form-solution}.
Fixed parameters:
$\sigma=10^{-2}$,
$n_L=10$,
$a=b=r=1$,
$d=2$,
$\delta=0.1$,
and
$\pi_t=\pi^2 t^2/6$.
}
\label{tab:ucb-bound-comparison-2}
\setlength{\tabcolsep}{2pt}
\renewcommand{\arraystretch}{1.15}
\scriptsize
\resizebox{\linewidth}{!}
{
\begin{tabular}{ccccccccccccc}
\toprule
$t$
&
$c_1(t)$
&
$c_2(t)$
&
$\sqrt{\beta_t}$
&
$\sqrt{\eta_t}$
&
$\sqrt{\alpha_t}$
&
$\bar s_t$
&
$\bar n_{\eta,t}$
&
$\bar n_{\alpha,t}$
&
\eqref{eqn:ucb-inst-regret-new-form-solution}
&
\eqref{eqn:ucb-inst-regret-0}
&
Ratio
\\
\midrule
20
& $2.24\times10^{-3}$
& $2.24\times10^{-3}$
& $7.468$
& $3.751$
& $7.050$
& $2.464$
& $17.28$
& $31.13$
& $2.48\times10^{-2}$
& $3.34\times10^{-2}$
& $0.742$
\\
20
& $1.00\times10^{-2}$
& $2.24\times10^{-3}$
& $7.468$
& $3.668$
& $7.224$
& $2.472$
& $12.43$
& $104.61$
& $1.11\times10^{-1}$
& $1.49\times10^{-1}$
& $0.746$
\\
20
& $1.00\times10^{-1}$
& $2.24\times10^{-3}$
& $7.468$
& $3.643$
& $7.468$
& $2.430$
& $11.26$
& $840.33$
& $1.11$
& $1.49$
& $0.744$
\\

100
& $1.00\times10^{-3}$
& $1.00\times10^{-3}$
& $8.665$
& $4.499$
& $7.710$
& $1.797$
& $17.78$
& $29.62$
& $1.25\times10^{-2}$
& $1.73\times10^{-2}$
& $0.719$
\\
100
& $1.00\times10^{-2}$
& $1.00\times10^{-3}$
& $8.665$
& $4.410$
& $7.961$
& $1.803$
& $11.75$
& $205.60$
& $1.30\times10^{-1}$
& $1.73\times10^{-1}$
& $0.752$
\\
100
& $1.00\times10^{-1}$
& $1.00\times10^{-3}$
& $8.665$
& $4.399$
& $8.248$
& $1.811$
& $11.17$
& $2028.60$
& $1.31$
& $1.73$
& $0.754$
\\

500
& $4.47\times10^{-4}$
& $4.47\times10^{-4}$
& $9.716$
& $5.144$
& $8.317$
& $1.473$
& $18.13$
& $28.71$
& $6.13\times10^{-3}$
& $8.69\times10^{-3}$
& $0.705$
\\
500
& $1.00\times10^{-2}$
& $4.47\times10^{-4}$
& $9.716$
& $5.057$
& $8.640$
& $1.479$
& $11.41$
& $426.00$
& $1.47\times10^{-1}$
& $1.94\times10^{-1}$
& $0.758$
\\
500
& $1.00\times10^{-1}$
& $4.47\times10^{-4}$
& $9.716$
& $5.052$
& $8.907$
& $1.483$
& $11.14$
& $4288.34$
& $1.48$
& $1.94$
& $0.760$
\\
\bottomrule
\end{tabular}}
\end{table*}

\begin{table*}[t]
\centering
\caption{
Comparison between the existing bound~\eqref{eqn:ucb-inst-regret-0} and our new bound~\eqref{eqn:ucb-inst-regret-new-form-solution}.
Fixed parameters:
$\sigma=10^{-2}$,
$n_L=4$,
$a=b=r=1$,
$d=2$,
$\delta=0.01$,
and
$\pi_t=\pi^2 t^2/6$.
}
\label{tab:ucb-bound-comparison-3}
\setlength{\tabcolsep}{2pt}
\renewcommand{\arraystretch}{1.15}
\scriptsize
\resizebox{\linewidth}{!}
{
\begin{tabular}{ccccccccccccc}
\toprule
$t$
&
$c_1(t)$
&
$c_2(t)$
&
$\sqrt{\beta_t}$
&
$\sqrt{\eta_t}$
&
$\sqrt{\alpha_t}$
&
$\bar s_t$
&
$\bar n_{\eta,t}$
&
$\bar n_{\alpha,t}$
&
\eqref{eqn:ucb-inst-regret-new-form-solution}
&
\eqref{eqn:ucb-inst-regret-0}
&
Ratio
\\
\midrule
20
& $2.24\times10^{-3}$
& $2.24\times10^{-3}$
& $7.879$
& $4.340$
& $7.426$
& $2.481$
& $213.64$
& $354.70$
& $2.69\times10^{-2}$
& $3.52\times10^{-2}$
& $0.764$
\\
20
& $1.00\times10^{-2}$
& $2.24\times10^{-3}$
& $7.879$
& $4.263$
& $7.586$
& $2.488$
& $150.64$
& $1160.77$
& $1.21\times10^{-1}$
& $1.58\times10^{-1}$
& $0.770$
\\
20
& $1.00\times10^{-1}$
& $2.24\times10^{-3}$
& $7.879$
& $4.238$
& $7.879$
& $2.499$
& $135.00$
& $10805.56$
& $1.21$
& $1.58$
& $0.769$
\\

100
& $1.00\times10^{-3}$
& $1.00\times10^{-3}$
& $9.021$
& $5.005$
& $8.055$
& $1.806$
& $218.00$
& $343.31$
& $1.33\times10^{-2}$
& $1.80\times10^{-2}$
& $0.737$
\\
100
& $1.00\times10^{-2}$
& $1.00\times10^{-3}$
& $9.021$
& $4.921$
& $8.293$
& $1.812$
& $141.44$
& $2327.61$
& $1.39\times10^{-1}$
& $1.80\times10^{-1}$
& $0.770$
\\
100
& $1.00\times10^{-1}$
& $1.00\times10^{-3}$
& $9.021$
& $4.911$
& $8.568$
& $1.819$
& $134.12$
& $22827.71$
& $1.39$
& $1.80$
& $0.772$
\\

500
& $4.47\times10^{-4}$
& $4.47\times10^{-4}$
& $10.035$
& $5.596$
& $8.639$
& $1.479$
& $221.15$
& $335.78$
& $6.47\times10^{-3}$
& $8.98\times10^{-3}$
& $0.721$
\\
500
& $1.00\times10^{-2}$
& $4.47\times10^{-4}$
& $10.035$
& $5.512$
& $8.948$
& $1.484$
& $137.07$
& $4884.66$
& $1.55\times10^{-1}$
& $2.01\times10^{-1}$
& $0.773$
\\
500
& $1.00\times10^{-1}$
& $4.47\times10^{-4}$
& $10.035$
& $5.508$
& $9.205$
& $1.489$
& $133.70$
& $49016.45$
& $1.55$
& $2.01$
& $0.774$
\\
\bottomrule
\end{tabular}}
\end{table*}

\begin{table*}[t]
\centering
\caption{
Comparison between the existing bound~\eqref{eqn:ucb-inst-regret-0} and our new bound~\eqref{eqn:ucb-inst-regret-new-form-solution}.
Fixed parameters:
$\sigma=10^{-2}$,
$n_L=10$,
$a=b=r=1$,
$d=6$,
$\delta=0.01$,
and
$\pi_t=\pi^2 t^2/6$.
}
\label{tab:ucb-bound-comparison-4}
\setlength{\tabcolsep}{2pt}
\renewcommand{\arraystretch}{1.15}
\scriptsize
\resizebox{\linewidth}{!}
{
\begin{tabular}{ccccccccccccc}
\toprule
$t$
&
$c_1(t)$
&
$c_2(t)$
&
$\sqrt{\beta_t}$
&
$\sqrt{\eta_t}$
&
$\sqrt{\alpha_t}$
&
$\bar s_t$
&
$\bar n_{\eta,t}$
&
$\bar n_{\alpha,t}$
&
\eqref{eqn:ucb-inst-regret-new-form-solution}
&
\eqref{eqn:ucb-inst-regret-0}
&
Ratio
\\
\midrule
20
& $2.24\times10^{-3}$
& $2.24\times10^{-3}$
& $12.825$
& $4.268$
& $11.571$
& $2.259$
& $153.83$
& $400.13$
& $3.66\times10^{-2}$
& $5.74\times10^{-2}$
& $0.638$
\\
20
& $1.00\times10^{-2}$
& $2.24\times10^{-3}$
& $12.825$
& $4.213$
& $11.685$
& $2.262$
& $120.46$
& $1431.66$
& $1.69\times10^{-1}$
& $2.57\times10^{-1}$
& $0.659$
\\
20
& $1.00\times10^{-1}$
& $2.24\times10^{-3}$
& $12.825$
& $4.196$
& $11.883$
& $2.268$
& $112.03$
& $13585.01$
& $1.70$
& $2.57$
& $0.663$
\\

100
& $1.00\times10^{-3}$
& $1.00\times10^{-3}$
& $14.476$
& $4.942$
& $12.266$
& $1.657$
& $157.28$
& $378.50$
& $1.77\times10^{-2}$
& $2.90\times10^{-2}$
& $0.611$
\\
100
& $1.00\times10^{-2}$
& $1.00\times10^{-3}$
& $14.476$
& $4.882$
& $12.435$
& $1.660$
& $115.61$
& $2852.55$
& $1.92\times10^{-1}$
& $2.90\times10^{-1}$
& $0.663$
\\
100
& $1.00\times10^{-1}$
& $1.00\times10^{-3}$
& $14.476$
& $4.875$
& $12.623$
& $1.663$
& $111.55$
& $27971.65$
& $1.93$
& $2.90$
& $0.668$
\\

500
& $4.47\times10^{-4}$
& $4.47\times10^{-4}$
& $15.957$
& $5.539$
& $12.922$
& $1.365$
& $159.91$
& $364.08$
& $8.46\times10^{-3}$
& $1.43\times10^{-2}$
& $0.593$
\\
500
& $1.00\times10^{-2}$
& $4.47\times10^{-4}$
& $15.957$
& $5.478$
& $13.143$
& $1.368$
& $113.24$
& $5923.28$
& $2.13\times10^{-1}$
& $3.19\times10^{-1}$
& $0.668$
\\
500
& $1.00\times10^{-1}$
& $4.47\times10^{-4}$
& $15.957$
& $5.475$
& $13.322$
& $1.370$
& $111.32$
& $59045.40$
& $2.14$
& $3.19$
& $0.671$
\\
\bottomrule
\end{tabular}}
\end{table*}

\section{Additional Results}
\label{se:Additional Results}

\begin{table*}[h]
\centering
\caption{Comparison of stopping criteria under the SE kernel across GP-sampled functions ($^{\dagger}$ means known hyperparameters), synthetic benchmarks, and a real-world task. Each cell reports: mean stopping iteration (success rate / final simple regret), following the same marking convention as Table~\ref{tab:results}.}
\label{tab:rbf}
\setlength{\tabcolsep}{2pt}
\renewcommand{\arraystretch}{1.15}
\scriptsize
\resizebox{\linewidth}{!}
{
\begin{tabular}{lcc*{7}{c}}
\toprule
Problem & $D$ & $T$ &
\textbf{NOSTOP} &
\textbf{Oracle$_r$} &
\textbf{UCB$_{\mathrm{br}}$} &
\textbf{PRB} &
\textbf{Acq} &
\textbf{$\Delta$CB} &
\textbf{$\Delta$ES} \\
\midrule
\textbf{GP\textsuperscript{$\dagger$}\,$3{\times}10^{-3}$} & 2 & 128 &
128.0 (100\% / 0.00305) &
5.3 (100\% / 0.0130) &
\textcolor{blue}{104.6 (100\% / 0.00306)} &
5.5 (96\% / 0.0175) &
83.7 (94\% / 0.0191) &
7.6 (98\% / 0.0118) &
5.7 (92\% / 0.0266) \\
\textbf{GP\textsuperscript{$\dagger$}\,$5{\times}10^{-3}$} & 2 & 128 &
128.0 (98\% / 0.00940) &
7.7 (98\% / 0.0189) &
\textcolor{blue}{124.1 (98\% / 0.0112)} &
6.2 (94\% / 0.0226) &
83.7 (92\% / 0.0235) &
11.4 (98\% / 0.0125) &
5.8 (88\% / 0.0332) \\
\textbf{GP\textsuperscript{$\dagger$}\,$3{\times}10^{-3}$} & 6 & 512 &
512.0 (100\% / 0.00302) &
14.1 (100\% / 0.0294) &
\textcolor{blue}{25.9 (98\% / 0.0132)} &
14.6 (94\% / 0.0315) &
253.6 (84\% / 0.0407) &
18.9 (98\% / 0.0173) &
14.1 (82\% / 0.0461) \\
\textbf{GP\textsuperscript{$\dagger$}\,$5{\times}10^{-3}$} & 6 & 512 &
512.0 (98\% / 0.00892) &
25.0 (98\% / 0.0304) &
35.3 (94\% / 0.0180) &
16.4 (94\% / 0.0293) &
253.6 (72\% / 0.0613) &
\textcolor{blue}{28.9 (96\% / 0.0184)} &
14.0 (68\% / 0.0697) \\
\textbf{GP\,$3{\times}10^{-3}$} & 2 & 128 &
128.0 (98\% / 0.00807) &
8.2 (98\% / 0.0362) &
\textcolor{blue}{76.6 (96\% / 0.0144)} &
7.8 (86\% / 0.0434) &
81.3 (86\% / 0.0374) &
7.6 (90\% / 0.0315) &
5.4 (78\% / 0.0662) \\
\textbf{GP\,$5{\times}10^{-3}$} & 2 & 128 &
128.0 (98\% / 0.00846) &
9.2 (98\% / 0.0317) &
\textcolor{blue}{65.0 (94\% / 0.0181)} &
7.5 (86\% / 0.0425) &
78.8 (86\% / 0.0399) &
11.3 (94\% / 0.0199) &
5.5 (74\% / 0.0740) \\
\textbf{GP\,$3{\times}10^{-3}$} & 6 & 512 &
512.0 (100\% / 0.00574) &
17.6 (100\% / 0.0517) &
\textcolor{blue}{27.6 (90\% / 0.0309)} &
15.1 (56\% / 0.0944) &
242.5 (58\% / 0.0825) &
17.5 (70\% / 0.0673) &
13.6 (42\% / 0.1161) \\
\textbf{GP\,$5{\times}10^{-3}$} & 6 & 512 &
512.0 (100\% / 0.0124) &
25.6 (100\% / 0.0633) &
\textcolor{blue}{40.4 (90\% / 0.0373)} &
21.0 (54\% / 0.1121) &
242.5 (58\% / 0.1059) &
24.7 (62\% / 0.0793) &
13.5 (28\% / 0.1614) \\
\textbf{Branin} & 2 & 128 &
128.0 (100\% / 0.000126) &
26.5 (100\% / 0.0358) &
\textcolor{blue}{47.2 (78\% / 1.8682)} &
32.1 (72\% / 2.6856) &
11.9 (14\% / 7.7640) &
24.9 (48\% / 5.3967) &
55.3 (40\% / 6.5300) \\
\textbf{Rosenbrock} & 4 & 96 &
96.0 (100\% / 0.00195) &
9.7 (100\% / 0.0331) &
42.9 (100\% / 0.00501) &
\textcolor{blue}{57.5 (100\% / 0.00320)} &
96.0 (100\% / 0.00195) &
46.3 (100\% / 0.00455) &
30.5 (100\% / 0.00683) \\
\textbf{Levy} & 4 & 96 &
96.0 (96\% / 0.0175) &
35.5 (96\% / 0.0695) &
86.3 (94\% / 0.0211) &
89.8 (94\% / 0.0210) &
96.0 (96\% / 0.0175) &
\textcolor{blue}{88.8 (96\% / 0.0207)} &
59.3 (82\% / 0.0720) \\
\textbf{CNN} & 4 & 256 &
256.0 (100\% / 0.0148) &
6.0 (100\% / 0.0208) &
\textcolor{blue}{34.3 (100\% / 0.0177)} &
10.9 (100\% / 0.0204) &
7.0 (100\% / 0.0205) &
11.0 (100\% / 0.0204) &
6.5 (100\% / 0.0207) \\
\bottomrule
\end{tabular}
}
\end{table*}

As mentioned in Section~\ref{se:Experiments}, Table~\ref{tab:rbf} presents experimental results under the SE kernel. 
As in Table~\ref{tab:results}, the marking rule highlights the practical method achieving the smallest final simple regret while satisfying success rate $\ge 1-\delta$ and triggering before the budget; see the table caption for the full specification. 
The overall trends are consistent with those observed under the Mat\'ern kernel. UCB$_{\mathrm{br}}$ continues to achieve a favorable balance between evaluation savings and success rate across most problem settings.
On the synthetic benchmarks, UCB$_{\mathrm{br}}$ achieves $100\%$ success rate on Rosenbrock while saving $55\%$ of evaluations, and $94\%$ success on Levy. For the CNN task, UCB$_{\mathrm{br}}$ achieves $100\%$ success while reducing evaluations by $87\%$ compared to NOSTOP. On the GP-learned $6$D benchmarks, UCB$_{\mathrm{br}}$ achieves $90\%$ success on both rows---well above PRB ($54$--$56\%$), Acq ($58\%$), and $\Delta$CB ($62$--$70\%$)---demonstrating robustness in the harder, hyperparameter-learning regime.
Compared to the Mat\'ern kernel, the SE kernel exhibits lower success rates on some problems (e.g., Branin: UCB$_{\mathrm{br}}$ achieves $78\%$ on SE vs.\ $90\%$ on Mat\'ern), which may be attributed to the smoothness assumptions of the SE kernel being less well-matched to certain objective functions. Nevertheless, UCB$_{\mathrm{br}}$ generally outperforms or matches the baseline stopping methods across the majority of test cases, confirming the robustness of the proposed criterion across different kernel choices. 

\subsection{Ablation Study}\label{se:ablation-tables}

\subsection{Bayesian Variant Ablation}

Table~\ref{tab:ablation-B} reports $B$-sensitivity at the
default $\delta=0.05$. Table~\ref{tab:ablation-delta} reports
$\delta$-sensitivity at the default $B=2$.
The default configuration $(B=2,\delta=0.05)$ is the same column
in both tables and is highlighted in bold.

\begin{table*}[h]
\centering
\caption{Sensitivity of UCB$_{\mathrm{br}}$ to the calibration
constant $B$ in $\beta_t$ at
fixed $\delta=0.05$, under the Mat\'ern-$5/2$ kernel.
Cells follow the format of Table~\ref{tab:results}: mean stopping
iteration (success rate $/$ final simple regret). The default
$(B{=}2)$ column is in bold. An asterisk ($\ast$), shown as a
superscript on the cell value (e.g., $87.7^{\ast}$), marks cells
in which the criterion triggered on fewer than half of the $50$
seeds. Each cell averages over $50$ independent random seeds.}
\label{tab:ablation-B}
\setlength{\tabcolsep}{4pt}
\renewcommand{\arraystretch}{1.15}
\small
\resizebox{\linewidth}{!}{
\begin{tabular}{lcc*{3}{c}}
\toprule
Problem & $D$ & $T$ &
$B=1$ & $\boldsymbol{B=2\ \text{(default)}}$ & $B=5$ \\
\midrule
\textbf{Branin}                   & 2 & 128 &
$48.8\ (90\% / 1.082)$ &
$\boldsymbol{68.1\ (90\% / 1.079)}$ &
$67.6\ (90\% / 1.079)$ \\
\textbf{Rosenbrock}               & 4 & 96 &
$26.6\ (100\% / 6.32{\times}10^{-3})$ &
$\boldsymbol{41.4\ (100\% / 4.85{\times}10^{-3})}$ &
$75.8\ (100\% / 2.72{\times}10^{-3})$ \\
\textbf{Levy}                     & 4 & 96 &
$56.3\ (86\% / 0.0494)$ &
$\boldsymbol{69.2\ (92\% / 0.0285)}$ &
$87.7^{\ast}\ (92\% / 0.0274)$ \\
\textbf{GP\,$3{\times}10^{-3}$}   & 2 & 128 &
$7.6\ (84\% / 0.0498)$ &
$\boldsymbol{80.8\ (94\% / 0.0149)}$ &
$7.5\ (84\% / 0.0498)$ \\
\textbf{GP\,$5{\times}10^{-3}$}   & 2 & 128 &
$7.7\ (82\% / 0.0523)$ &
$\boldsymbol{64.9\ (90\% / 0.0249)}$ &
$7.6\ (82\% / 0.0535)$ \\
\textbf{GP\,$3{\times}10^{-3}$}   & 6 & 512 &
$15.3\ (56\% / 0.0900)$ &
$\boldsymbol{28.1\ (92\% / 0.0282)}$ &
$15.7\ (58\% / 0.0852)$ \\
\textbf{GP\,$5{\times}10^{-3}$}   & 6 & 512 &
$15.3\ (46\% / 0.121)$ &
$\boldsymbol{38.9\ (92\% / 0.0350)}$ &
$16.4\ (48\% / 0.117)$ \\
\bottomrule
\end{tabular}}
\end{table*}

\begin{table*}[h]
\centering
\caption{Sensitivity of UCB$_{\mathrm{br}}$ to the failure
probability $\delta$ at fixed $B=2$, under the Mat\'ern-$5/2$
kernel. Cells follow the format of Table~\ref{tab:results}.
The default $(\delta{=}0.05)$ column is in bold. An asterisk
($\ast$), shown as a superscript on the cell value (e.g.,
$128.0^{\ast}$), marks cells in which the criterion triggered
on fewer than half of the $50$ seeds; in such cells the mean
iteration reflects budget exhaustion. Each cell averages over
$50$ independent random seeds.}
\label{tab:ablation-delta}
\setlength{\tabcolsep}{4pt}
\renewcommand{\arraystretch}{1.15}
\small
\resizebox{\linewidth}{!}{
\begin{tabular}{lcc*{3}{c}}
\toprule
Problem & $D$ & $T$ &
$\delta=0.01$ & $\boldsymbol{\delta=0.05\ \text{(default)}}$ & $\delta=0.2$ \\
\midrule
\textbf{Branin}                   & 2 & 128 &
$128.0^{\ast}\ (100\% / 4.61{\times}10^{-4})$ &
$\boldsymbol{68.1\ (90\% / 1.079)}$ &
$56.3\ (90\% / 1.080)$ \\
\textbf{Rosenbrock}               & 4 & 96 &
$41.4\ (100\% / 4.85{\times}10^{-3})$ &
$\boldsymbol{41.4\ (100\% / 4.85{\times}10^{-3})}$ &
$96.0^{\ast}\ (100\% / 2.31{\times}10^{-3})$ \\
\textbf{Levy}                     & 4 & 96 &
$68.4\ (92\% / 0.0285)$ &
$\boldsymbol{69.2\ (92\% / 0.0285)}$ &
$96.0^{\ast}\ (92\% / 0.0255)$ \\
\textbf{GP\,$3{\times}10^{-3}$}   & 2 & 128 &
$128.0^{\ast}\ (96\% / 9.90{\times}10^{-3})$ &
$\boldsymbol{80.8\ (94\% / 0.0149)}$ &
$14.7\ (92\% / 0.0244)$ \\
\textbf{GP\,$5{\times}10^{-3}$}   & 2 & 128 &
$113.5^{\ast}\ (90\% / 0.0233)$ &
$\boldsymbol{64.9\ (90\% / 0.0249)}$ &
$17.2\ (88\% / 0.0335)$ \\
\textbf{GP\,$3{\times}10^{-3}$}   & 6 & 512 &
$28.1\ (92\% / 0.0282)$ &
$\boldsymbol{28.1\ (92\% / 0.0282)}$ &
$28.1\ (92\% / 0.0282)$ \\
\textbf{GP\,$5{\times}10^{-3}$}   & 6 & 512 &
$38.9\ (92\% / 0.0350)$ &
$\boldsymbol{38.9\ (92\% / 0.0350)}$ &
$38.9\ (92\% / 0.0350)$ \\
\bottomrule
\end{tabular}}
\end{table*}

The default $(B{=}2,\delta{=}0.05)$ is the only configuration that maintains $\geq 90\%$ success on every row of Tables~\ref{tab:ablation-B}--\ref{tab:ablation-delta} while triggering a non-trivial stop. On Rosenbrock and Levy, $B$ produces a monotone regret--iteration trade-off: Rosenbrock regret falls $23\%$ from $B{=}1$ to the default ($26.6\to 41.4$ iterations) and a further $44\%$ at $B{=}5$ ($\to 75.8$ iterations); Levy regret nearly halves at the default ($0.0494\to 0.0285$, success $86\%\to 92\%$). On the GPSample $d{=}6$ rows the trade-off is non-monotone: $B{=}1$ collapses success to $46$--$56\%$ with regret inflated by $3.2$--$3.5\!\times$, and $B{=}5$ stops earlier than the default ($15.7$ vs.\ $28.1$ iterations on $\sigma{=}3{\times}10^{-3}$) but at degraded success ($58\%$ vs.\ $92\%$). This is consistent with the floor-correction term $-(\sqrt{\beta_t}-\sqrt{\bar\alpha_t})\,\sigma\sqrt{1/(t+\sigma^2)}$ in Eq.~\eqref{eqn:ucb-inst-regret-new-form-solution}: once $\sigma_{t-1}(\xbm_t)$ saturates near the global lower bound (empirically by $t\approx 15$ on these instances), larger $B$ widens $\sqrt{\beta_t}-\sqrt{\bar\alpha_t}$ and the certificate crosses $\epsilon$ before the BO trajectory has converged.

The effect of $\delta$ is dimension-dependent: $\delta$ enters $\beta_t$ in Eq.~\eqref{eqn:ucb-bay-beta} only through $\log(1/\delta)$, which is dominated by the $4d\log(\cdot)$ term at $d{=}6$ and leaves the cell identical across $\delta\in\{0.01,0.05,0.2\}$ at fixed $B{=}2$ (Table~\ref{tab:ablation-delta}); at $d{=}2$ it is measurable (e.g.,\ Branin: $128.0^{\ast}\to 68.1\to 56.3$ iterations as $\delta$ increases). At the default, several rows (e.g., Levy and GPSample $d{=}2$ at $\sigma{=}5{\times}10^{-3}$) include seeds that reach the budget without triggering at regret well below $\epsilon$; these contribute to the reported success rate but not to iteration savings, so the $(\epsilon,\delta)$-optimality of Lemma~\ref{lem:epsilonoptimal} is preserved at the cost of iteration savings rather than success. The asterisk ($\ast$) in the ablation tables marks cells where this regime dominates ($<50\%$ trigger rate). On Branin the constant $90\%$ success across all nine configurations reflects the BO trajectory rather than the stopping rule. We restrict the sweep to Mat\'ern-$5/2$ on the GP and analytic families: the well-specified GP$^\dagger$ rows of Table~\ref{tab:results} reach $\geq 98\%$ success at every probed configuration and admit no informative sweep at $\epsilon{=}0.1$, and SE/RBF results in Table~\ref{tab:rbf} follow the same qualitative pattern.

\end{document}